\documentclass{article}

\usepackage{iclr2026_conference,times}

\usepackage{abbrevs}
\newname\PL{continuous piecewise linear} %

\usepackage{amssymb}

\usepackage[utf8]{inputenc} %

\usepackage{graphicx}

\usepackage[T1]{fontenc}

\IfFileExists{headers/config/showoverfull.config}{
	\overfullrule=1cm
}{
}

\usepackage{marginnote}

\usepackage[backgroundcolor=none,linecolor=red,textsize=footnotesize]{todonotes}

\usepackage{etoolbox}

\newbool{includeappendix}
\setbool{includeappendix}{true} %
\IfFileExists{headers/config/noappendix.config}{
	\setbool{includeappendix}{false}
}{}

\newif\ifincludeappendixx
\ifbool{includeappendix}{
	\includeappendixxtrue
}{
	\includeappendixxfalse
}

\usepackage{xr} %
\usepackage{filecontents}

\ifbool{includeappendix}{}{
	\input{appendix-labels-loader}

	\externaldocument{appendix-labels}
}

\newcommand{\eg}{e.g., }
\newcommand{\ie}{i.e., }

\definecolor{my-full-blue}{HTML}{1F77B4}

\definecolor{my-full-orange}{HTML}{FF7F0E}

\definecolor{my-full-green}{HTML}{2CA02C}

\definecolor{my-full-red}{HTML}{d62728}

\definecolor{my-full-purple}{HTML}{9467bd}

\definecolor{my-full-brown}{HTML}{8c564b}

\definecolor{my-full-pink}{HTML}{e377c2}

\definecolor{my-full-gray}{HTML}{7f7f7f}

\definecolor{my-full-olive}{HTML}{bcbd22}

\definecolor{my-full-cyan}{HTML}{17becf}

\definecolor{muted-green}{RGB}{87,132,89}
\definecolor{muted-red}{RGB}{218,90,84}

\colorlet{cgreen}{muted-green}
\colorlet{cred}{muted-red}

\colorlet{my-blue}{my-full-blue!30}
\colorlet{my-orange}{my-full-orange!30}
\colorlet{my-green}{my-full-green!30}
\colorlet{my-red}{my-full-red!30}
\colorlet{my-purple}{my-full-purple!30}
\colorlet{my-brown}{my-full-brown!30}
\colorlet{my-pink}{my-full-pink!30}
\colorlet{my-gray}{my-full-gray!30}
\colorlet{my-olive}{my-full-olive!30}
\colorlet{my-cyan}{my-full-cyan!30}

\usepackage{listings}

\usepackage{textcomp}

\usepackage{xcolor}

\usepackage[scaled=0.8]{beramono}

\definecolor{ckeyword}{HTML}{7F0055}
\definecolor{ccomment}{HTML}{3F7F5F}
\definecolor{cstring}{HTML}{2A0099}

\lstdefinestyle{numbers}{
	numbers=left,
	framexleftmargin=20pt,
	numberstyle=\tiny,
	firstnumber=auto,
	numbersep=1em,
	xleftmargin=2em
}

\lstdefinestyle{layout}{
	frame=none,
	captionpos=b,
}

\lstdefinestyle{comment-style}{
	morecomment=[l]//,
	morecomment=[s]{/*}{*/},
	commentstyle={\color{ccomment}\itshape},
}

\lstdefinestyle{string-style}{
	morestring=[b]",%
	morestring=[b]',%
	stringstyle={\color{cstring}},
	showstringspaces=false,%
}

\lstdefinestyle{keyword-style}{
	keywordstyle={\ttfamily\bfseries},
	morekeywords={
		function,
		constructor,
		int,
		bool,
		return,
		returns,
		uint
	},
	morekeywords = [2]{},
	keywordstyle = [2]{\text},
	sensitive=true,
}

\lstdefinestyle{input-encoding}{
	inputencoding=utf8,
	extendedchars=true,
	literate=
	{ℝ}{$\reals$}1%
	{→}{$\rightarrow$}1%
	{α}{$\alpha$}1%
	{β}{$\beta$}1%
	{λ}{$\lambda$}1%
	{θ}{$\theta$}1%
	{ϕ}{$\phi$}1%
}

\lstdefinestyle{escaping}{
	moredelim={**[is][\color{blue}]{\%}{\%}},
	escapechar=|,
	mathescape=true
}

\lstdefinestyle{default-style}{
	basicstyle=\fontencoding{T1}\ttfamily\footnotesize,
	style=numbers,
	style=layout,
	style=comment-style,
	style=string-style,
	style=keyword-style,
	style=input-encoding,
	style=escaping,
	tabsize=2,
	upquote=true
}

\lstdefinelanguage{BASIC}{
	language=C++,
	style=default-style
}[keywords,comments,strings]%

\lstset{language=BASIC}

\usepackage{tikz}

\usetikzlibrary{arrows}
\usetikzlibrary{automata}
\usetikzlibrary{calc}
\usetikzlibrary{backgrounds}
\usetikzlibrary{decorations.markings}
\usetikzlibrary{decorations.pathmorphing}
\usetikzlibrary{decorations.pathreplacing}
\usetikzlibrary{fit}
\usetikzlibrary{patterns}
\usetikzlibrary{positioning}
\usetikzlibrary{shadows}
\usetikzlibrary{shapes}
\usetikzlibrary{shapes.geometric}

\usepackage{amsmath,amsfonts,bm}

\def\floor#1{\lfloor #1 \rfloor}
\def\1{\bm{1}}

\def\vzero{{\bm{0}}}

\def\va{{\bm{a}}}
\def\vb{{\bm{b}}}
\def\vc{{\bm{c}}}

\def\ve{{\bm{e}}}

\def\vh{{\bm{h}}}

\def\vu{{\bm{u}}}
\def\vv{{\bm{v}}}

\def\vx{{\bm{x}}}
\def\vy{{\bm{y}}}
\def\vz{{\bm{z}}}

\def\mA{{\bm{A}}}

\DeclareMathAlphabet{\mathsfit}{\encodingdefault}{\sfdefault}{m}{sl}
\SetMathAlphabet{\mathsfit}{bold}{\encodingdefault}{\sfdefault}{bx}{n}

\def\gA{{\mathcal{A}}}

\def\gC{{\mathcal{C}}}

\def\gM{{\mathcal{M}}}

\def\gP{{\mathcal{P}}}

\def\gS{{\mathcal{S}}}

\def\sN{{\mathbb{N}}}

\def\sR{{\mathbb{R}}}

\DeclareMathOperator*{\conv}{conv}

\usepackage[hidelinks]{hyperref}
\usepackage{url}

\usepackage{adjustbox}
\usepackage[capitalise,noabbrev]{cleveref}
\usepackage{scrextend}

\usepackage{enumitem}
\usepackage{threeparttable}
\usepackage{multirow, makecell}
\usepackage{booktabs}
\usepackage{xspace}
\usepackage{wrapfig}
\usepackage{dsfont}
\usepackage[normalem]{ulem}

\usepackage{subcaption}
\usepackage{caption}
\usepackage{dashrule}
\captionsetup[figure]{font=footnotesize}

\usepackage{amsthm}
\usepackage{thmtools}
\usepackage{thm-restate}
\usepackage{mathtools}
\usepackage[makeroom]{cancel}

\declaretheoremstyle[
  spaceabove=0.5em, %
  spacebelow=0.01em,%
  headfont=\normalfont\bfseries,
  bodyfont=\normalfont,
  postheadspace=0.5em,
]{mystyle}

\declaretheoremstyle[
  spaceabove=0.5em, %
  spacebelow=0.01em,%
  headfont=\normalfont\itshape,
  bodyfont=\normalfont,
  postheadspace=0.5em,
  qed=$\square$,
]{myproofstyle}

\declaretheorem[name=Theorem,numberwithin=section,style=mystyle]{thm}
\crefname{thm}{Theorem}{Theorems}

\crefname{lem}{Lemma}{Lemmas}

\crefname{prop}{Proposition}{Propositions}

\crefname{cor}{Corollary}{Corollaries}

\declaretheorem[name=Definition,numberlike=thm,style=mystyle]{defi}
\crefname{defi}{Definition}{Definitions}

\usepackage{tikz}
\usetikzlibrary{calc,decorations,decorations.pathmorphing}
\usetikzlibrary{positioning,fit,arrows}
\usetikzlibrary{decorations.markings}
\usetikzlibrary{shapes,shapes.geometric}
\usetikzlibrary{shadows,patterns,snakes}
\usetikzlibrary{backgrounds,decorations.pathreplacing,automata}
\usetikzlibrary{intersections}
\usetikzlibrary{angles,quotes}
\usetikzlibrary{plotmarks}
\usetikzlibrary{patterns}
\usetikzlibrary{arrows.meta}
\usetikzlibrary{shapes.misc}
\usetikzlibrary{chains}
\usepackage{siunitx}
\newcolumntype{d}[1]{S[table-format=#1]}
\usetikzlibrary{patterns.meta}

\makeatletter
\def\extractcoord#1#2#3{
	\path let \p1=(#3) in \pgfextra{
		\pgfmathsetmacro#1{\x{1}/\pgf@xx}
		\pgfmathsetmacro#2{\y{1}/\pgf@yy}
		\xdef#1{#1} \xdef#2{#2}
	};
}
\makeatother

\usepackage{pifont}%

\newcommand{\deeppoly}{\textsc{DeepPoly}\xspace}

\newcommand{\ibp}{\textsc{IBP}\xspace}
\renewcommand{\triangle}{$\Delta$\xspace}

\newcommand{\diag}{\text{diag}\xspace}

\DeclareMathOperator*{\din}{{\mathit{d}_{\text{in}}}}

\usepackage{comment}

\makeatletter
\makeatother

\crefformat{section}{\S#2#1#3}

\crefrangeformat{section}{\S#3#1#4\crefrangeconjunction\S#5#2#6}

\crefmultiformat{section}{\S#2#1#3}{\crefpairconjunction\S#2#1#3}{\crefmiddleconjunction\S#2#1#3}{\creflastconjunction\S#2#1#3}

\newcommand{\crefrangeconjunction}{--}

\crefname{listing}{Lst.}{listings}
\crefname{line}{Lin.}{Lin.}
\crefname{appendix}{App.}{App.}

\newcommand{\appref}[1]{%
	\ifbool{includeappendix}{\cref{#1}}{the appendix}%
}
\newcommand{\Appref}[1]{%
	\ifbool{includeappendix}{\cref{#1}}{The appendix}%
}

\usepackage{algorithm,algpseudocode}

\title{Expressiveness of Multi-Neuron Convex \\Relaxations in Neural Network Certification}

\author{Yuhao Mao$^{\dagger*}$, Yani Zhang$^{\ddagger}$\thanks{Equal contribution} \, \& Martin Vechev$^\dagger$ \\
$^\dagger$Department of Computer Science\\
$^\ddagger$Department of Information Technology and Electrical Engineering\\
ETH Z\"urich, Switzerland\\
\texttt{\{yuhao.mao,martin.vechev\}@inf.ethz.ch, yanizhang@mins.ee.ethz.ch}
}

\iclrfinalcopy %

\begin{document}

\maketitle

\begin{abstract}
	Neural network certification methods heavily rely on convex relaxations to provide robustness guarantees. However, these relaxations are often imprecise: even the most accurate single-neuron relaxation is incomplete for general ReLU networks, a limitation known as the \emph{single-neuron convex barrier}. While multi-neuron relaxations have been heuristically applied to address this issue, two central questions arise: (i) whether they overcome the convex barrier, and if not, (ii) whether they offer theoretical capabilities beyond those of single-neuron relaxations.
In this work, we present the first rigorous analysis of the expressiveness of multi-neuron relaxations. Perhaps surprisingly, we show that they are inherently incomplete, even when allocated sufficient resources to capture finitely many neurons and layers optimally. This result extends the single-neuron barrier to a \textit{universal convex barrier} for neural network certification. 
On the positive side, we show that completeness can be achieved by either (i) augmenting the network with a polynomial number of carefully designed ReLU neurons or (ii) partitioning the input domain into convex sub-polytopes, thereby distinguishing multi-neuron relaxations from single-neuron ones which are unable to realize the former and have worse partition complexity for the latter.
Our findings establish a foundation for multi-neuron relaxations and point to new directions for certified robustness, including training methods tailored to multi-neuron relaxations and verification methods with multi-neuron relaxations as the main subroutine.
	
\end{abstract}

\section{Introduction}
\label{sec:introduction}

Neural networks are
vulnerable to adversarial attacks \citep{SzegedyZSBEGF13}, where a small perturbation to the input can lead to misclassification. Adversarial robustness, 
which measures the robustness of a model with respect to adversarial perturbations, has received much research attention in recent years.
However, computing the exact adversarial robustness of a general neural network is  coNP-hard \citep{KatzBDJK17}, while adversarial attacks \citep{Carlini017,TramerCBM20} that try to find an adversarial perturbation can only provide a heuristic upper bound on the robustness of the model.
To tackle this issue, neural network certification has been proposed to provide robustness guarantees. 
 In the context of robustness certification, the task boils down to providing a numerical bound on the output of a neural network for all possible inputs within a given set. 
A central property of certification is \emph{completeness}, which requires the method to provide exact bounds for all cases.

 Certification methods based on 
 convex relaxations
can provide efficient certification by computing an over-approximation of the feasible output set of a given network, with certain trade-off on the precision \citep{WongK18,SinghGMPV18,WengZCSHDBD18,GehrMDTCV18,XuS0WCHKLH20}. 
They can also be incorporated in the training process to deliver models that are easy to certify \citep{ShiWZYH21,MullerE0V23,MaoM0V23,MaoBV24,PalmaBDKSL23,balauca2024overcoming}.
Due to the central role of convex relaxations 
 in the context of certified robustness, 
it is crucial to understand their theoretical properties. 

\paragraph{The Single-Neuron Convex Barrier}
Single-neuron relaxations are widely studied due to their popularity and simplicity. However, the single-neuron convex barrier \citep{SalmanY0HZ19, PalmaBBTK21} prevents single-neuron convex relaxations from providing exact bounds for general ReLU networks. 
 \citet{baader2024expressivity} further show that even the most precise single-neuron relaxation, namely 
 Triangle \citep{WongK18},
cannot exactly bound any ReLU network encoding the ``$\max$''
function in $\sR^2$. 
To overcome this limitation, multi-neuron relaxations have been proposed \citep{SinghGMPV18,MullerMSPV22,ZhangWXLLJ22}, achieving higher empirical precision. Yet, their theoretical properties remain largely unexplored.
In particular, it is unclear whether multi-neuron relaxations are able to provably bypass the convex barrier and provide complete certification for general ReLU networks, if given sufficient resources. A key challenge is that, unlike the single-neuron setting—where proving a barrier only requires exhibiting a concrete network for which the most precise single-neuron relaxation fails—a multi-neuron relaxation can always be made more precise by allocating more resources, thus this question cannot be answered via empirical studies. Moreover, solving multi-neuron relaxations is significantly more computationally expensive, making empirical exploration of their limits difficult.

\paragraph{This Work: Quantifying the Expressiveness and Completeness of Multi-Neuron Relaxations}

In this work, we formalize the notion of multi-neuron relaxations  and 
 rigorously investigate their expressiveness. We address 
 two central questions: (i) whether they overcome the single-neuron convex barrier, and if not, (ii) whether they offer fundamental advantages over single-neuron relaxations.

\paragraph{Key Contributions}
\begin{itemize}[left=10pt, topsep=0pt, parsep=0pt, partopsep=0pt]
    \item We prove that multi-neuron relaxations are inherently incomplete for general ReLU networks, even provided with sufficient resources to capture all neurons in each individual layer optimally (\cref{sec:no_cross_layer}). This incompleteness result is extended to relaxations involving finitely many layers and networks with non-polynomial activations, e.g., tanh,
     establishing a universal convex barrier for neural network certification with convex relaxations (\cref{sec:cross_layer}).

    \item We prove that with equivalence-preserving network transformations, a layerwise multi-neuron relaxation can be a complete verifier, which is impossible for any single-neuron relaxation. This shows that the expressivity of general ReLU networks is preserved under multi-neuron relaxations: 
    every \PL function can be encoded by a network that is exactly bounded by some layerwise multi-neuron relaxation  (\cref{sec:equivalent_transformation}). This stands in sharp contrast to the  impossibility result established for single-neuron relaxations \citep{baader2024expressivity}: in a case study, we demonstrate that a simple network implementing the ``$\max$'' function in 
    $[0,1]^d$
    can be exactly bounded by a dimension-independent multi-neuron relaxation far weaker than required by the general theorem. 

    \item We analyze the properties of multi-neuron relaxations under convex polytope partitioning and show that their partition 
    complexity required to achieve complete certification is strictly lower than that of single-neuron relaxations (\cref{sec:input_split}).

    \item  We discuss the practical implications of the above theorems,
    including training strategies tailored to multi-neuron relaxations and verification methods with multi-neuron relaxations 
    as the main subroutine (\cref{sec:discussion}).
\end{itemize}
Aside from the prior works mentioned, an extended discussion of related work can be found in \cref{sec:related_work}.

\section{Background} \label{sec:background}

\begin{wrapfigure}{r}{.3\linewidth}
	\centering
	\vspace{-0.5cm}
	\resizebox{\linewidth}{!}{
	\begin{tikzpicture}

	\def\a{-1.8}
	\def\b{1.4}
	\def\al{0.15}
	\coordinate (a) at ({\a},{0});
	\coordinate (a1) at ({\a},{\a});
	\coordinate (b0) at ({\b},{0});
	\coordinate (b) at ({\b},{\b});
	\coordinate (c) at ({0},{0});
	\coordinate (aub) at ({\a},{\b-(\b-\a)*\al});
	\coordinate (bub) at ({\b},{(\b-\a)*\al});
	\coordinate (alb) at ({\a},{\a*\al});
	\coordinate (blb) at ({\b},{\b*\al});
	
	\node[circle, fill=black, minimum size=3pt,inner sep=0pt, outer sep=0pt] at (a) {};
	\node[circle, fill=black, minimum size=3pt,inner sep=0pt, outer sep=0pt] at (b) {};
	\node[circle, fill=black, minimum size=3pt,inner sep=0pt, outer sep=0pt] at (c) {};
	
	\fill[fill=blue!30] (a) -- (c) -- (b) -- cycle;
	\draw[black,ultra thick] (a) -- (c) -- (b);
	\draw[-] (b) -- ($(b)+(0.3,0.3)$);
	\draw[-] ({\a},-0.4) -- ({\a},0.4);
	\draw[-] ({\b},-0.4) -- ({\b},1.9);

	\node[anchor=north east,align=center,scale=0.85] at ({\a},-0.10) {$-1$};
	\node[anchor=south west,align=center,scale=0.85] at ({\b},-0.50) {$1$};	
	\node[anchor=south east,align=center,scale=0.85] at ({-0.3},0.70) {$y\leq \frac{1}{2} (x+1)$};
	\node[anchor=north west,align=center,scale=0.85] at (-{1.2},0.00) {$y\geq 0$};
	\node[anchor=north west,align=center,scale=0.85] at (0.4,0.6) {$y\geq x$};
	
	\draw[-Stealth] (-2.5, 0) -- (2.5, 0) node[right,scale=0.85] {$x$};
	\draw[-Stealth] (0, -0.4) -- (0, 1.6) node[above,scale=0.85] {$y$};

\end{tikzpicture}
	}
	\caption{Triangle relaxation of a ReLU with input $x \in [-1, 1]$.}
	\vspace{-0.5cm}
	\label{fig:ReLU_triangle}
\end{wrapfigure}

\subsection{Convex Relaxations for Certification}
Given a function $f:\sR^{d} \rightarrow  \sR^{d'}$ and a compact domain $X \subseteq \sR^{d}$,
we denote the graph of the function $\{(x,f(x)):x\in X\}$ by $f[X]$.
The certification task boils down to computing the upper and lower bounds of $f(X):=\{f(x) | x \in X\}$,
in order to verify that these bounds meet certain requirements, \eg adversarial robustness.
To this end, convex relaxations approximate $f[X]$ by conditioned convex polytopes $S \subseteq  \sR^{d+d'}$ satisfying $S \supseteq f[X]$, where the condition depends on the concrete relaxation method.
We then take the upper and lower bounds of $S$ (projected onto $\sR^{d'}$)
as an over-approximation of the bounds of $f(X)$.
We denote by $\gC(\vx^{(1)},\ldots, \vx^{(L)})$ a set of affine constraints on the variables $\vx^{(1)},\ldots, \vx^{(L)}$. 
Its feasible set is the intersection of the feasible set of each included affine constraint. 
When context is clear, we use $\gC$ to refer to both the affine constraint set and its feasible set; 
for two constraint sets $\gC_1$ and $\gC_2$, we use $\gC_1 \land \gC_2$ to denote the combination of the constraints in $\gC_1$ and $\gC_2$, i.e., their feasible sets are intersected.
For an affine constraint set $\gC(\vx, \vy)$ dependent on $\vx$, we denote by $\pi_{\vx}(\gC)$ the projection of the feasible set
onto the $\vx$-space, which can be computed by, \eg applying the Fourier-Motzkin algorithm to remove the variables in $\gC$ other than $\vx$.
We assume the domain $X$ to be a convex polytope, \eg $L_\infty$ neighborhoods of a reference point,
which is the common practice in certification.
Such convex sets $S$ can be represented by a set of affine constraints $\gC(\vx, f(\vx))$ as well.
For example, consider the ReLU function $y = \rho(x) = \max(x, 0)$ on the domain $X=[-1, 1]$, represented by $\gC_0 = \{x \ge -1, x \le 1\}$.
One possible convex relaxation is the Triangle relaxation \citep{WongK18},
represented by the affine constraints
$\gC_1 = \{ y \ge x,   y\ge 0,  y \le  \frac{1}{2}(x+1)\}$.
\cref{fig:ReLU_triangle} illustrates this, 
where the black thick line represents $f[X]$ and the colored area stands for $S$. In this example, $\pi_{x} (\gC_0 \land \gC_1) = [-1,1]$ and $\pi_y (\gC_0 \land \gC_1) = [0,1]$.

\subsection{ReLU Network Analysis with Layerwise and Cross-Layer Convex Relaxations}\label{sec:define_layer_and_cross}

Consider a network\footnote{Unless explicitly stated otherwise, the term \emph{network} is understood as ReLU neural network.} 
$f = W_L \circ \rho \circ \dots \circ \rho \circ W_1$ where $W_j$ are the affine layers for $j \in [L]$ and $\rho$ is the ReLU function.
Denote the input variable %
by $\vx$, the first layer by $\vv^{(1)} := W_1 (\vx)$, the second layer by $\vv^{(2)} := \rho (\vv^{(1)})$, and so on\footnote{We consider affine transformation and ReLU as separate layers throughout the paper.}.
Assume the input convex polytope $X$ is defined by the affine constraint set $\gC_0(\vx)$. 
A \emph{layerwise convex relaxation} works as follows.
Given the input convex polytope\footnote{We always assume the input convex polytope is non-empty.} $\gC_0(\vx)$,
apply the convex relaxation to the first layer $\vv^{(1)} = W_1(\vx)$ to obtain a set of affine constraints $\gC_{1}(\vx, \vv^{(1)})$. 
Then, based on $ \pi_{\vv^{(1)}}(\gC_0 \land \gC_1)$, 
apply it to the second layer $\vv^{(2)} = \rho(\vv^{(1)})$ to obtain a set of affine constraints $\gC_{2}(\vv^{(1)}, \vv^{(2)})$.
Proceeding by layer, we obtain affine constraint sets $\gC_{ j+1}(\vv^{(j)}, \vv^{(j+1)})$,
for $j \in [2L-2]$.
All the constraints pertain to a single layer and no explicit constraint across layers is allowed, \eg
$\gC(\vx, \vv^{(2L-1)})$ would not appear explicitly in the above procedure.
Finally, we combine all constraints to get $\gC = \gC_0(\vx) \land \gC_{1}(\vx,\vv^{(1)})\land \cdots \land \gC_{2L-1}(\vv^{(2L-2)}, \vv^{(2L-1)})$, and solve $\gC$ to obtain the upper and lower bounds of the output variable $\vv^{(2L-1)}$. These bounds are then used to certify the network.

In contrast to layerwise relaxations which consider every layer separately, \emph{cross-layer relaxations} \citep{ZhangWXLLJ22} include constraints involving multiple consecutive layers. Concretely, let $r\in \sN^+$, 
for the network $f$ above, a cross-$r$-layer relaxation processes the first $r$ layers jointly and returns a set of affine constraints $\gC_1(\vx,\vv^{(1)}, \ldots,\vv^{(r)})$. Proceeding again by layer, we obtain affine constraint sets 
$\gC_2(\vv^{(1)},\ldots, \vv^{(1+r)}), \ldots, \gC_{2L-r}(\vv^{(2L-r-1)},\ldots,\vv^{(2L-1)})$, and the intersection of all feasible sets is solved to return bounds on $\vv^{(2L-1)}$.
We denote by $\gP_r$ the convex relaxation that always returns
the convex hull of the function graph of every $r$ adjacent layers on an input convex polytope to
the considered layers, which is, by definition, the most precise cross-$r$-layer convex relaxation, and likewise denote by $\gP_1$ the most precise layerwise (cross-$1$-layer) convex relaxation.
In other words, given a feasible set $S$ in the $\vv^{(i)}$ space, $\gP_r$ returns a constraint set equivalent to the convex hull of $\{(\vv^{(i)}, \dots, \vv^{(i+r)}) \mid \vv^{(i)} \in S\}$ for all $i$. All cross-$r$-layer relaxations cannot be made more precise than $\gP_r$ by definition.

For a set $H$, we denote its convex hull by $\conv(H)$.
For a compact set $X \subseteq \sR^d$, we denote by $\min X$ the $d$-dimensional vector
whose elements are the minimum value of points in $X$ on each coordinate. For example, $\min [0,1]^2 = [0,0]$.
Given a relaxation method $\gP$, a network $f$, and an input set $X$, we denote by $\ell(f,\gP,X)$ 
the vector of lower bounds on each dimension of $f$  computed by $\gP$ with respect to $X$; likewise we denote by $u(f,\gP,X)$ the upper bounds. In this work, we assume
linear programming is employed to solve the constraint sets generated by the convex relaxation
methods, and it always returns optimal bounds based on the constraints, without indicating the
existence or nonexistence of a feasible point attaining the bounds.
A glossary of all notations is detailed in \cref{app:notation}.

\subsection{Single-Neuron and Multi-Neuron Relaxations}\label{sec:define_single_multiple}
Within the framework of layerwise convex relaxations, the optimal constraint set on an affine layer
$\vy = \mA \vx+\vb$ is always $\gC(\vx,\vy) = \{\mA \vx+\vb - \vy \le \vzero,  -\mA \vx -\vb +\vy  \le \vzero \} $,
which translates to the equality $\vy = \mA \vx+\vb$.
Such constraints introduce no loss of precision, and thus are adopted by most convex relaxation methods. Concretely, other than \ibp, all convex relaxation methods considered in this paper use the exact constraints on affine layers.
The core difference between relaxation methods is how they handle the ReLU function.
Single-neuron relaxation methods %
process each ReLU neuron separately and disregard the interdependence between neurons,
while multi-neuron relaxations consider a group of ReLU neurons jointly.
For the vector $\vx$, let $\vx_i$ denote its $i$-th entry and $\vx_I$ be
the sub-vector of $\vx$ with entries corresponding to the indices in the set $I$.
For the ReLU layer $\vy =\rho (\vx)$ with $x \in \sR^d$,
the constraint sets
computed by single-neuron relaxations are of the form $\gC(\vx_i,\vy_i)$ with $i\in [d]$.
In contrast,
multi-neuron relaxations produce constraints of the form $\gC(\vx_{I_1}, \vy_{I_2})$ with $I_1, I_2 \subseteq [d]$. We only consider multi-neuron relaxations that are at least as precise as single-neuron relaxations, \ie for every $i \in [d]$, there exist $I_1, I_2$ such that $i \in I_1 \cap I_2$.

\citet{SinghGPV19B} propose the first multi-neuron relaxation called $k$-ReLU.
For the ReLU layer $\vy = \rho(\vx)$, it considers
at most $k$ unstable neurons jointly---we call neurons that switch their activation states within the input set as unstable, otherwise we call them stable---
and returns $\gC(\vx_I, \vy_I)$, with $I \subseteq [d], |I|\le k$.
However, $k$-ReLU reduces to the single-neuron Triangle relaxation in some cases (see the case study in \cref{sec:equivalent_transformation}), thus we consider a stronger multi-neuron relaxation which only restricts the number of output variables in the constraints, 
allowing $\gC(\vx, \vy)$ to be of the form $\gC(\vx, \vy_I)$ with $I \subseteq [d], |I|\le k$. Similar tricks are also used in \citet{TjandraatmadjaA20}.
We denote this special multi-neuron relaxation as $\gM_k$,
and assume it always computes the convex hull of $(\vx, \rho(\vx_I))$,
while only one index set $I$ is allowed per ReLU layer.
We emphasize that $\gM_k$ is allowed to consider unstable and stable neurons together, 
while $k$-ReLU only considers unstable neurons and the corresponding inputs jointly, thus $\gM_k$ is more precise even when $k$-ReLU also computes the convex hull of the considered variables.
Neurons that are not considered by a multi-neuron relaxation are processed by the single-neuron Triangle relaxation.
For ReLU networks of width no more than $k$, $\gM_k$, as a layerwise relaxation, is equivalent to the most precise layerwise relaxation $\gP_1$. We note that $\gP_r$  is a multi-neuron relaxation by definition, for every $r \in \sN^+$.
A toy example is provided in \cref{app:illustration} to further illustrate the concepts introduced above. We refer interested readers to \citet{baader2024expressivity} for a more detailed introduction to concrete single-neuron and multi-neuron relaxation methods.

\section{Layerwise Multi-Neuron Incompleteness}\label{sec:no_cross_layer}

In this section, we establish the incompleteness result for layerwise multi-neuron relaxations. 
We consider $\gP_1$, the most precise layerwise multi-neuron relaxation by definition, and show that it is incomplete, and the relaxation error can be arbitrarily large. 
This result naturally extends to all layerwise ReLU network verifiers, as 
they cannot be more precise than $\gP_1$.

We start with a simple example to demonstrate the idea. 
Consider the input set $X=[-1,1]^2$ and the ReLU network $f = f'\circ \rho\circ W_1$, 
where $f' = \rho(\vx_1 -1)+\rho(1-\vx_1)+\rho(\vx_2-1)+\rho(1-\vx_2)$ encodes the function $f'(\vx_1,\vx_2) = |\vx_1-1|+|\vx_2-1|, \vx\in\sR^2$, and $W_1$ is the affine transformation
$  W_1(\vx) := \begin{pmatrix}
    -1 &-1.5 \\
   -1 &1.5
\end{pmatrix} \vx+ \begin{pmatrix}
    -0.5 \\-0.5
\end{pmatrix}, \text{ for } \vx \in \sR^2.$
Let $\vu:=\rho(W_1(\vx))$. 
As illustrated in \cref{fig:example}, the affine layer $W_1$ and the subsequent ReLU transform the input set into the polytope union $U =\{\vu_1\geq 0, \vu_2\geq 0, \vu_1+\vu_2\leq 1\} \cup \{1\leq \vu_1\leq 2, \vu_2=0\} \cup\{1\leq \vu_2\leq 2, \vu_1=0\}$. 
The minimal value of $f$ on $X$ is thus $\min f(X)  = \min f'(U)  = 1$.
However, we will show $\ell(f,\gP_1,X) \leq 0$,
hence it is impossible to obtain the exact lower bound. To see this, consider the specific point $\vu^* = (1,1)$. 
On one hand, since $\gP_1$ is a sound convex relaxation, the affine constraints obtained on the layer $\rho$ and $W_1$ characterize a convex superset of $U$, thus a superset of the convex hull of $U$ which contains $\vu^*$.  
On the other hand, since $\gP_1$ prohibits affine constraints across nonadjacent layers, 
the affine constraints induced by the subsequent layers $f'$ cannot remove $\vu^*$ from the feasible set (formalized later in \cref{lem:do_not_reduce_feasible_set}). Hence, the returned lower bound satisfies $\ell(f,\gP_1,X) \leq f'(\vu^*)=0$.

We observe a general phenomenon from the example above: for a ReLU network $f=f_2 \circ f_1$, 
where $f_1$ and $f_2$ are its subnetworks,
if (1) $f_1$ maps the input set to a set $U$ whose convex hull is its strict 
superset, that is, $ \conv (U) \setminus U \ne \emptyset$, 
and (2) the subsequent network $f_2$ attains its extremal values at some point $u\in  \conv (U) \setminus U$, then a layerwise convex relaxation method \emph{cannot provide} exact bounds on $f$ for the given input set. 
This reveals a fundamental limit of layerwise multi-neuron verifiers:
there exist networks for which no verifier can provide exact bounds. 
In other words, all layerwise multi-neuron relaxations are incomplete, regardless of how many neurons in a single layer are jointly considered.
Further, as we shall show next, the relaxation error can be unbounded. 
The rest of this section is devoted to formalizing and proving the ideas above.

We first establish two lemmata characterizing properties of layerwise convex relaxations. \cref{lem:do_not_reduce_feasible_set} below states that affine constraints induced by layerwise convex relaxations 
on some hidden layer
cannot reduce the feasible set on its preceding layers.

\begin{restatable}{lem}{growfeasibleset} \label{lem:do_not_reduce_feasible_set}
        Let $L\in \sN$ and let $X$ be a convex polytope. 
        Consider a ReLU network $f =f_L\circ \cdots \circ f_1$.
        Denote the variable of the $j$-th hidden layer of $f$ by $\vv^{(j)}$, for $j\in [L-1]$, and the variable of the output layer by $\vv^{(L)}$.
    For $1\leq i<L$,
        let $\gC_1(\vx, \vv^{(1)},\ldots,\vv^{(i)} )$ and $\gC_2(\vx, \vv^{(1)},\ldots,\vv^{(L)} )$ be the set of all constraints 
    obtained by applying $\gP_1$ to the first $i$ and $L$ layers of $f$, respectively. Then, $ \pi_{\vv^{(i)}}(\gC_1(\vx, \vv^{(1)},\ldots,\vv^{(i)} )) = \pi_{\vv^{(i)}}(\gC_2(\vx, \vv^{(1)},\ldots,\vv^{(L)} )).$
    \end{restatable}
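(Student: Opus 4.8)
The plan is to show both inclusions of $\pi_{\vv^{(i)}}(\gC_1) = \pi_{\vv^{(i)}}(\gC_2)$ separately, where the nontrivial direction is $\pi_{\vv^{(i)}}(\gC_1) \subseteq \pi_{\vv^{(i)}}(\gC_2)$, i.e. that adding the constraints for layers $i+1,\ldots,L$ does not shrink the projection onto the $\vv^{(i)}$-space. The reverse inclusion $\pi_{\vv^{(i)}}(\gC_2) \subseteq \pi_{\vv^{(i)}}(\gC_1)$ is immediate: $\gC_2$ is obtained by taking $\gC_1$ and conjoining further affine constraint sets $\gC_{i+1},\ldots,\gC_L$ (here I index by layers after the split, matching the layerwise procedure in \cref{sec:define_layer_and_cross}), so the feasible set of $\gC_2$ is contained in that of $\gC_1$, and projections are monotone.

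For the hard direction, I would proceed by induction on the number of additional layers $L - i$. The base case $L = i+1$: let $\vw \in \pi_{\vv^{(i)}}(\gC_1)$; I must exhibit a feasible value $\vv^{(i+1)}$ for $\gC_{i+1}(\vv^{(i)},\vv^{(i+1)})$ when $\vv^{(i)} = \vw$. Here is where the key structural fact about $\gP_1$ enters: $\gP_1$ applied to layer $i+1$ takes as its input domain the set $D := \pi_{\vv^{(i)}}(\gC_1)$ — precisely the projection we are working with — and returns a constraint set equivalent to $\conv\{(\vv^{(i)}, g(\vv^{(i)})) : \vv^{(i)} \in D\}$, where $g$ is either an affine map (affine layer, exact equality constraint) or the ReLU map (in which case $\gP_1$ returns the convex hull of the graph over $D$). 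In either case, for every $\vw \in D$ the point $(\vw, g(\vw))$ lies in this graph, hence in its convex hull, hence satisfies $\gC_{i+1}$; so $\vw \in \pi_{\vv^{(i)}}(\gC_1 \land \gC_{i+1}) = \pi_{\vv^{(i)}}(\gC_2)$. The inductive step is the same argument applied one layer out: the projection of $\gC_1 \land \gC_{i+1}$ onto $\vv^{(i+1)}$ is, by the inductive hypothesis applied to the subnetwork starting at layer $i+1$ (or by re-running the base-case argument), exactly the domain that $\gP_1$ feeds into layer $i+2$, so feasibility propagates forward without ever constraining $\vv^{(i)}$.

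The main obstacle — and the step requiring the most care — is pinning down exactly what "apply $\gP_1$ to layer $j$ based on $\pi_{\vv^{(j-1)}}(\text{constraints so far})$" means as a statement about feasible sets, and verifying that the domain fed to layer $j$ in the length-$L$ run coincides with the one fed in the length-$i$ run for all $j \le i$ (so that $\gC_1$ really is a syntactic prefix of $\gC_2$) and, for $j > i$, that this domain is exactly the projection of the accumulated constraints. This is essentially bookkeeping built into the definition of a layerwise relaxation, but it must be stated cleanly: the crucial invariant to maintain through the induction is that after processing layer $j$, the feasible set projected onto $\vv^{(j)}$ equals the input domain that $\gP_1$ will use for layer $j+1$, and that this projection contains $g_j(D_{j-1})$ pointwise where $D_{j-1}$ is the previous domain. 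Once this invariant is set up, soundness of $\gP_1$ ($S \supseteq f[X]$, so the graph is contained in the returned convex set) gives the pointwise feasibility that drives every step, and no relaxation applied downstream of layer $i$ can ever delete a point of $D = \pi_{\vv^{(i)}}(\gC_1)$ from the $\vv^{(i)}$-projection.
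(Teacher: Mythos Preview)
Your proof is correct and rests on the same key insight as the paper's: soundness of $\gP_1$ guarantees that for every point $\vw$ in the domain fed to layer $i+1$, the graph point $(\vw, g(\vw))$ lies in the returned convex hull, so adjoining downstream constraints can never delete $\vw$ from the $\vv^{(i)}$-projection.

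The paper packages the argument slightly differently and avoids induction. It writes $\gC_2 = \gC_1 \land \gC_3$ with $\gC_3$ the conjunction of all constraints for layers $i+1,\ldots,L$, observes that the variable sets of $\gC_1$ and $\gC_3$ overlap only at $\vv^{(i)}$, and concludes (via Fourier--Motzkin / the obvious factorization of projections over disjoint variable blocks) that $\pi_{\vv^{(i)}}(\gC_2) = \pi_{\vv^{(i)}}(\gC_1) \cap \pi_{\vv^{(i)}}(\gC_3)$. Soundness of $\gP_1$ on the whole downstream block then gives $\pi_{\vv^{(i)}}(\gC_3) \supseteq \pi_{\vv^{(i)}}(\gC_1)$ in one shot, finishing the proof. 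Your layer-by-layer induction is the unrolled version of this same step: it is more explicit about constructing the witness extension but does more bookkeeping. Both are fine; the paper's variable-separation trick is worth knowing as it collapses the whole induction into a single line.
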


The proof is based on the definition of layerwise convex relaxations and is straightforward; we defer it to \cref{app:proof_lem:do_not_reduce_feasible_set}.
\cref{lem:do_not_reduce_feasible_set} shows that 
the constraints induced by the deeper-than-$i$ layers do not affect the feasible set of $\vv^{(i)}$. 
Despite the simplicity, this observation leads to \cref{lem:bound_cannot_improve}, 
which states that the bounds computed by $\gP_1$ cannot be better than splitting the network into two subnetworks 
at some hidden layer
and then computing their convex hulls separately.

\begin{restatable}{lem}{boundcannotimprove}\label{lem:bound_cannot_improve}
    Let $X$ be a convex polytope and 
    consider a network $f := f_2 \circ f_1$, where $f_1$ and $f_2$ are its subnetworks.
    Then, $ \ell(f, \gP_1, X) \le \min (f_2 (\conv(f_1(X))))$ and $ u(f, \gP_1, X) \ge \max (f_2 (\conv(f_1(X))))$.
    \end{restatable}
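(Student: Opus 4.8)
The plan is to use Lemma~\ref{lem:do_not_reduce_feasible_set} to show that the feasible set on the interface layer between $f_1$ and $f_2$, as computed by $\gP_1$ after processing the \emph{whole} network, is exactly the same as the feasible set computed by $\gP_1$ after processing only $f_1$ — and that the latter is a convex superset of $f_1(X)$, hence contains $\conv(f_1(X))$. Write $f = f_2\circ f_1$ and let $\vv$ denote the variable of the layer where the split occurs (i.e.\ the output layer of $f_1$ / input layer of $f_2$). Let $\gC$ be the full constraint set produced by $\gP_1$ on $f$ with input polytope $X$, and let $\gC^{(1)}$ be the constraint set produced by $\gP_1$ on just $f_1$. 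I would first argue two things: (a) $\pi_{\vv}(\gC) = \pi_{\vv}(\gC^{(1)})$ by Lemma~\ref{lem:do_not_reduce_feasible_set}; and (b) $\pi_{\vv}(\gC^{(1)}) \supseteq f_1(X)$ because $\gP_1$ is a sound relaxation — every layerwise step produces constraints whose feasible set contains the true graph of that layer restricted to the incoming feasible set, so by an easy induction on the layers of $f_1$ the projection onto $\vv$ contains all reachable values $f_1(\vx)$ for $\vx\in X$. Since $\pi_{\vv}(\gC^{(1)})$ is convex (a projection of a polytope), it follows that $\pi_{\vv}(\gC) \supseteq \conv(f_1(X))$.

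Next I would handle the second half of the network. On the layers of $f_2$, the constraints in $\gC$ restricted to those layers form exactly the constraint set $\gP_1$ would produce when run on $f_2$ with incoming feasible set $\pi_{\vv}(\gC)$. Because these constraints are sound, the feasible set of $\gC$ projected onto the output variable $\vv^{(L)}$ contains $\{f_2(\vv) : \vv \in \pi_{\vv}(\gC)\} \supseteq f_2(\conv(f_1(X)))$. Since $\ell(f,\gP_1,X)$ is by definition the coordinatewise minimum over the feasible set of $\gC$ projected onto the output, and this feasible set contains $f_2(\conv(f_1(X)))$, we get $\ell(f,\gP_1,X) \le \min(f_2(\conv(f_1(X))))$ coordinatewise; the upper bound statement is symmetric, replacing $\min$ by $\max$.

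One technical subtlety to be careful about: the affine transformations sitting at the interface. The split $f = f_2\circ f_1$ need not respect the ``affine and ReLU are separate layers'' convention, so $\vv$ might be, say, a ReLU-layer output while $f_2$ starts with an affine map, or vice versa; I would note that the argument is insensitive to this because (i) affine constraints introduce no relaxation error (they encode equalities, as recalled in \S\ref{sec:define_single_multiple}), and (ii) Lemma~\ref{lem:do_not_reduce_feasible_set} is stated for a general decomposition $f = f_L\circ\cdots\circ f_1$, so we can always refine the decomposition into canonical affine/ReLU layers and apply it at the canonical layer nearest the split. The main obstacle I anticipate is purely bookkeeping: making the claim ``the constraints of $\gC$ on the $f_2$-part coincide with those $\gP_1$ generates on $f_2$ alone with incoming set $\pi_{\vv}(\gC)$'' precise, since $\gP_1$'s behavior is defined procedurally layer-by-layer and one must check that feeding it $\pi_{\vv}(\gC) = \pi_{\vv}(\gC^{(1)})$ as the starting polytope reproduces exactly those constraints. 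This is where invoking Lemma~\ref{lem:do_not_reduce_feasible_set} for the equality $\pi_{\vv}(\gC)=\pi_{\vv}(\gC^{(1)})$ does the real work; everything else is soundness and convexity of projections.
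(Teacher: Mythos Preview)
Your proposal is correct and follows essentially the same approach as the paper: use Lemma~\ref{lem:do_not_reduce_feasible_set} to identify the interface feasible set with $\pi_{\vv}(\gC^{(1)})$, observe that this convex set must contain $\conv(f_1(X))$ by soundness, and then invoke soundness of the relaxation on the $f_2$ layers to conclude. The paper's write-up is terser and orders the steps slightly differently (it bounds $\ell$ via the interface feasible set first and applies Lemma~\ref{lem:do_not_reduce_feasible_set} second), but the argument is the same; your discussion of the affine/ReLU split convention is extra care the paper omits.
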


The proof of \cref{lem:bound_cannot_improve} is as follows: 
for $f_1$, 
the best approximation that a convex relaxation can attain is the convex hull of the output set of $f_1$; 
as a consequence of 
\cref{lem:do_not_reduce_feasible_set}, when processing $f_2$, 
$\gP_1$ will take the whole set $\conv(f_1(X))$ into account.
Thus, the best bound that $\gP_1$ can achieve is no better than 
bounding $f_2(\conv(f_1(X)))$.
The detailed proof of \cref{lem:bound_cannot_improve} is deferred to \cref{app:proof_lem:bound_cannot_improve}.

Now we are ready to show that the layerwise multi-neuron relaxation $\gP_1$ is incomplete.
\begin{restatable}{thm}{infiniterelaxationerror} \label{thm:infinite_relaxation_error}
        Let $d \in \sN$ and let $X$ be a convex polytope in $\sR^d$. 
        For every $0<T<\infty$,
        there exists a ReLU network $f:\sR^d\rightarrow \sR$ such that $\ell(f,\gP_1, X) \le \min f(X) - T$, and a ReLU network $g:\sR^d\rightarrow \sR$ such that
        $ u(g,\gP_1, X) \ge \max g(X) + T$.
        \end{restatable}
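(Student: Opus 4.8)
The strategy is to build on \cref{lem:bound_cannot_improve}, which already tells us that $\ell(f,\gP_1,X) \le \min(f_2(\conv(f_1(X))))$ for any decomposition $f = f_2 \circ f_1$. So the whole task reduces to engineering a two-part network: a first part $f_1$ whose image $U := f_1(X)$ is a "thin" non-convex set whose convex hull contains a distinguished point $\vu^*$ far (in some direction the second part can amplify) from $U$ itself, and a second part $f_2$ that is small on $U$ but takes a hugely negative value at $\vu^*$. The toy example in the text already exhibits the mechanism with $X = [-1,1]^2$: the affine map $W_1$ followed by ReLU produces the non-convex union $U$ (an $L$-shaped/triangular region) whose convex hull contains $\vu^* = (1,1)$, while $f_1(X) = U$ keeps $f'$ at value $\ge 1$, yet $f'(\vu^*) = 0$. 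To get arbitrarily large error $T$, I would simply scale: replace $f'$ by $T \cdot f'$ (or, to stay in the ReLU-network formalism cleanly, scale the weights of the final affine layer by $T$), which is still a ReLU network, still attains minimum $\ge T$ on $U$, but evaluates to $0$ at $\vu^*$. Then $\min f(X) = T$ while $\ell(f,\gP_1,X) \le f_2(\vu^*) = 0 = \min f(X) - T$.

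First I would handle the general input dimension $d$: the problem asks for a network on $\sR^d$ for arbitrary $d$, but the construction above is two-dimensional. The clean fix is to precompose with a projection: let $P:\sR^d \to \sR^2$ be the coordinate projection onto the first two coordinates (an affine, hence ReLU-network-expressible, map), so that $P(X)$ is a convex polytope in $\sR^2$ containing a box on which the $2$-D gadget works; alternatively, note $X$ compact convex in $\sR^d$ can be affinely mapped onto (or into) $[-1,1]^2$ after a suitable translation/scaling as long as $X$ has nonempty interior in at least the relevant directions — but since the theorem lets us \emph{choose} the network $f$ for the given $X$, I just need \emph{some} affine map $A:\sR^d\to\sR^2$ such that $A(X)$ is a polytope on which the gadget produces a non-convex image with an interior witness point; taking $A$ to send $X$ onto a segment or a nondegenerate polygon suffices, and I'd pick $A$ so that $A(X) = [-1,1]^2$ is not required — even $A(X)$ a single point fails, so I should argue $X$ has at least one direction of positive width and route two such directions (or reuse one) into the gadget. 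I'll take $A$ to be an affine surjection onto a $2$-D polytope with nonempty interior when $\dim X \ge 2$, and a $1$-D degenerate version (using a single unstable ReLU whose input ranges over an interval, producing the non-convex "hinge" $\{(t,\rho(t)) : t\in[-1,1]\}$ whose convex hull contains $(0,0)$ off the graph) when $X$ is lower-dimensional; in all cases the witness point lies in $\conv(f_1(X)) \setminus f_1(X)$.

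Then I would verify the two geometric facts used: (1) $\vu^* \in \conv(U) \setminus U$ — for the $L$-shaped $U$ this is immediate since $(1,1)$ lies strictly between the two arms $\{1\le\vu_1\le2,\vu_2=0\}$ and $\{1\le\vu_2\le2,\vu_1=0\}$, hence in the convex hull, but $(1,1)$ satisfies neither $\vu_1+\vu_2\le1$ nor lies on either arm; and (2) $\min f'(U) = 1$, attained on the central triangle. Both are elementary polytope computations. Finally, the upper-bound statement for $g$ follows by the symmetric argument: take $g = -f$ (negating the last affine layer), apply the $u(\cdot)$ half of \cref{lem:bound_cannot_improve}, and get $u(g,\gP_1,X) \ge \max(-f_2)(\conv(f_1(X))) \ge -f_2(\vu^*) = 0 = \max g(X) + T$ since $\max g(X) = -\min f(X) = -T$.

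The main obstacle is not the amplification (trivial scaling) nor the lemma (given), but the \textbf{dimension-reduction bookkeeping}: making sure that for an \emph{arbitrary} convex polytope $X\subseteq\sR^d$ — which could be a single point, a segment, or full-dimensional — one can always route it through an affine map into a configuration where ReLU creates a genuinely non-convex image with a convex-hull witness point off that image, and confirming that the subsequent scaled ReLU subnetwork is positive on the image and vanishes at the witness. The edge case $\dim X = 0$ (a single point) must be excluded or handled by noting the statement is then vacuous-ish — but actually if $X$ is a point there is no relaxation gap possible, so the theorem implicitly assumes $X$ is nondegenerate (or more precisely the claim is interesting only then); I would state the construction for $\dim X \ge 1$, using a single-coordinate affine image onto $[-1,1]$ and the one-neuron hinge gadget, which already suffices for all $d\ge1$, and remark that the $2$-D example is included only for intuition.
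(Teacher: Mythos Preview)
Your approach is essentially the same as the paper's: invoke \cref{lem:bound_cannot_improve}, build a first subnetwork whose ReLU image is non-convex with a witness point in the convex hull, follow it by a scaled subnetwork that is bounded below on the image but vanishes at the witness, handle general $d$ by an affine projection to low dimension (the paper projects to $\sR^1$ rather than $\sR^2$, but the mechanism is identical), and obtain the upper-bound statement by negating the last layer. One small slip: in your 1-D hinge gadget the point $(0,0)$ \emph{is} on the graph $\{(t,\rho(t)):t\in[-1,1]\}$, so pick instead a witness like $(0,\tfrac{1}{4})$ strictly inside the triangle; and as you note, the degenerate case $\dim X=0$ is tacitly excluded in the paper as well.
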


The proof is deferred to \cref{app:proof_thm:infinite_relaxation_error}. Informally, we construct a network $f$ such that the convex hull of the output set of the first subnetwork is a strict superset of the output set, and the subsequent layers attain its extreme values at points outside the reachable set. The construction is similar to the example provided at the beginning of this section. 
Then, we can scale the weights of the output layer by a large enough constant to make the relaxation error arbitrarily large.

\cref{thm:infinite_relaxation_error} is an unfortunate result for layerwise multi-neuron relaxations. It shows that every layerwise convex relaxation has a failure case where the relaxation error is arbitrarily large, though calculating them, e.g., $\gP_1$, is already computationally expensive for large networks. 

\begin{figure}
	\centering
    \vspace{-1em}
	\resizebox{.7\linewidth}{!}{
	\begin{tikzpicture}
    \def\scalemath{0.45}
    \def\opacity{0.6}

	\def\a{0.3}
  
	\def\b{1.4}
	\def\al{0.15}
	\coordinate (a) at (-\a, \a);
    \coordinate (b) at (\a,\a);
    \coordinate (c) at (\a,-\a);
    \coordinate (d) at (-\a,-\a);

	\fill[fill=blue!30] (a) -- (b) -- (c) -- (d)-- cycle;
	
    \node at (1.5,0.0) {$\Rightarrow$};
    \node[scale =\scalemath] at (1.5,0.25) {$\vy = W_1 (\vx)$};

    \coordinate (a1) at ({2.1},{0});
    \coordinate (b1) at ({2.7},{0.6});
    \coordinate (c1) at  ({3.6},{-0.3});
    \coordinate (d1) at ({3},{-0.9});

	\fill[fill=blue!30] (a1) -- (b1) -- (c1) -- (d1)-- cycle;
    \begin{scope}
        \clip (a1) -- (b1) -- (c1) -- (d1)-- cycle;
        \fill[pattern=grid, pattern color=black, opacity=0.3] (a1) -- (b1) -- (c1) -- (d1);
    \end{scope}

    \node at (4.7,0.0) {$\Rightarrow$};
    \node[scale =\scalemath] at (4.7,0.25) {$\vu = \rho(\vy)$};

    \coordinate (a2) at (6,0);
    \coordinate (b2) at (6.3,0);
    \coordinate (c2) at (6, 0.3);
    \coordinate (d2) at  (6, 0.6);
    \coordinate (e2) at (6.6, 0);
    \coordinate (u) at (6.3, 0.3);
    \fill[black] (u) circle (0.8pt);
    \node[above right,scale=\scalemath] at (u) {$u^*$};
    \draw[very thick,blue!30]  (a2) -- (d2);
    \draw[very thick, blue!30]  (a2) -- (e2);
    \fill[fill=blue!30] (a2) -- (b2) -- (c2) -- cycle;

            \begin{scope}
                \clip (a2) -- (d2) -- (e2) -- cycle;
                \fill[pattern=grid, pattern color=black, opacity=0.3] (a2) to (d2)
                to (e2);
            \end{scope}

    \draw[-Stealth] (-0.8, 0) -- (0.8, 0) node[right,scale=\scalemath] {$x_1$};
    \draw[-Stealth] (0, -0.8) -- (0,0.8) node[above,scale=\scalemath] {$x_2$};
    \draw[-Stealth] (3-0.8, 0) -- (3+0.8, 0) node[right,scale=\scalemath] {$y_1$};
	\draw[-Stealth] (3, -0.8) -- (3, 0.8) node[above,scale=\scalemath] {$y_2$};
    \draw[-Stealth] (6-0.8, 0) -- (6+0.8, 0) node[right,scale=\scalemath] {$u_1$};
	\draw[-Stealth] (6, -0.8) -- (6, 0.8) node[above,scale=\scalemath] {$u_2$};
\end{tikzpicture}
	}
	\caption{Blue area shows how the input box transforms under $W_1$ and ReLU; shaded area is the feasible set computed by $\gP_1$.}
	\label{fig:example}
    \vspace{-1em}
\end{figure}

\section{Cross-Layer Multi-Neuron Incompleteness}\label{sec:cross_layer}
For networks of $L$ layers, $\gP_L$ can provide exact bounds as it computes the convex hull of the input-output function. 
Since $\gP_1$ is proven incomplete in \cref{sec:no_cross_layer}, the natural question is whether there exists some $r\in \sN^+$ for $\gP_r$
to be complete.
Instead of fixing $r$ to be a constant, we consider this question in its full generality by allowing $r$ to depend on $L$ and ask:
does there exist $\alpha \in (0,1)$ such that $\gP_{\max(1, \lfloor \alpha L\rfloor)}$ provides exact bounds for all networks with $L$ layers? 
Our result is rather surprising: no such $\alpha$ exists. 
This directly implies the incompleteness of $\gP_r$ for all $r\in \sN^+$. 
Thus, the commonly believed ``single-neuron'' barrier of convex relaxations is actually a misnomer, as it extends to every multi-neuron convex relaxation, and should be renamed \emph{the universal convex barrier}.

The key insight behind our result is that for every fixed $\alpha \in (0,1)$, the cross-layer relaxation $\gP_{\max(1, \lfloor \alpha L\rfloor)}$ shares similar limitations to $\gP_1$ for certain networks. 
Formally,
\begin{restatable}{lem}{pumpinglemma} \label{lem:pumping_lemma}
  Let $\alpha \in (0,1), d,d',L_1,L_2 \in \sN^+$, and $X\subseteq \sR^d$ be a convex polytope.
  For every $L_1$-layer network $f_1: \sR^{d} \rightarrow \sR^{d'}$
 and $L_2$-layer network $f_2: \sR^{d'} \rightarrow \sR$, there exist $L> L_1+L_2$ and a $L$-layer network $f$ such that (i) $f(\vx)=f_2 \circ f_1(\vx)$, for $\forall \vx \in X$, and (ii)
 $ \ell(f, \gP_{\max(1, \lfloor \alpha L\rfloor)}, X) \le \min f_2(\conv(f_1(X)))$ and $ u(f, \gP_{\max(1, \lfloor \alpha L\rfloor)}, X) \ge \max f_2(\conv(f_1(X))).$
\end{restatable}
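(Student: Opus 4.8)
The plan is to prove the lemma by a \emph{padding} (``pumping'') construction that forces $\gP_{\max(1,\floor{\alpha L})}$ to behave, across the interface between $f_1$ and $f_2$, exactly like the layerwise relaxation $\gP_1$ does in \cref{lem:bound_cannot_improve}. Concretely, I would interpose between $f_1$ and $f_2$ a block of $q$ equivalence-preserving layers whose composition is the identity on the bounded set $f_1(X)$ --- for instance alternating affine and ReLU layers, with the affine maps shifting activations into the positive orthant so that each ReLU acts affinely there, then shifting back. Set $L=L_1+q+L_2$; then $f$ computes $f_2\circ f_1$ on $X$, so (i) holds, and $L>L_1+L_2$. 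The key quantitative point is that $r:=\max(1,\floor{\alpha L})\le \alpha L=\alpha(L_1+q+L_2)$ while $1-\alpha>0$, so $q$ can be chosen large enough (a finite value depending only on $\alpha,L_1,L_2$) that $r\le q+1$. With this choice every sliding window of $\gP_r$ spans only $r\le q+1$ consecutive layers, hence cannot reach both a layer of $f_1$ and a layer of the original $f_2$: each window lies entirely in ``$f_1$ plus padding'' or entirely in ``padding plus $f_2$''.

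For (ii) I would prove the lower bound; the upper bound is symmetric, using the maximizer of $f_2$ in place of the minimizer. Since $X$ is a compact polytope and $f_1,f_2$ are continuous, $\conv(f_1(X))$ is compact, so $f_2$ attains its minimum there at some $u^*=\sum_l\mu_l f_1(\vz_l)$ with $\vz_l\in X$, $\mu_l\ge 0$, $\sum_l\mu_l=1$ (Carath\'eodory). I would then exhibit a single point $\vp^*$ in the constraint system $\gC$ that $\gP_r$ produces for $f$ whose output coordinate equals $f_2(u^*)=\min f_2(\conv(f_1(X)))$; optimality of the linear program then gives $\ell(f,\gP_r,X)\le f_2(u^*)$, which is the claim. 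The point $\vp^*$ is a \emph{hybrid trajectory}: on the layers up to and including the padding block it is the $\mu_l$-convex combination of the true network trajectories of the inputs $\vz_l$ (which at the junction equals $u^*$, and at each padding layer equals the corresponding affine image of $u^*$, since the $\mu_l$ sum to one), and on the layers of the original $f_2$ it is the single true trajectory of $f_2$ started from $u^*$; the two descriptions agree wherever they overlap, and the last coordinate is $f_2(u^*)$ by construction.

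Then I would verify that $\vp^*$ satisfies every window constraint, by induction over the windows in the order $\gP_r$ generates them, maintaining the invariant that each relevant coordinate of $\vp^*$ lies in the per-layer feasible set computed so far (note the feasible sets always contain the reachable sets, by soundness of $\gP_r$). For a window that starts at or before the last layer of $f_1$ --- hence, by $r\le q+1$, contained in ``$f_1$ plus padding'' --- the restriction of $\vp^*$ to that window is a convex combination of restrictions of true trajectories, each starting from a value in the reachable set of its initial layer, hence in the current feasible set; therefore the restriction lies in the convex hull $\gP_r$ computes for that window. For a window that starts after $f_1$ --- contained in ``padding plus $f_2$'' --- the restriction of $\vp^*$ is a single true sub-trajectory of $f_2$ whose initial value, by the induction hypothesis, lies in the current feasible set; again it lies in the convex hull. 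No window falls outside these two cases, so $\vp^*\in\gC$, and the lower bound follows.

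I expect the main obstacle to be the cross-layer analogue of \cref{lem:do_not_reduce_feasible_set} and \cref{lem:bound_cannot_improve}: windows that straddle the $f_1$--$f_2$ interface couple layers of $f_1$ with later layers, so a priori $\gP_r$ could exploit this coupling to remove $u^*$ (which is \emph{not} actually reachable at the junction) from the feasible set. The construction neutralizes exactly this, because $r\le q+1$ forces any straddling window to see only affine (identity-acting) layers on the $f_2$ side, so a convex combination of true trajectories still falls in its convex hull. Making this precise --- carefully tracking which transitions each window covers, and checking at each inductive step that the per-layer feasible sets do contain the hybrid values --- together with the integer bookkeeping around $\floor{\alpha L}$, is where the real work lies; the remainder of the argument parallels the reasoning behind \cref{lem:bound_cannot_improve}.
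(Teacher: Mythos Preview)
Your proposal is correct and follows the paper's pumping strategy: insert enough identity-acting layers between $f_1$ and $f_2$ so that no $r$-layer window of $\gP_r$ can reach both subnetworks, forcing the relaxation at the junction to admit all of $\conv(f_1(X))$. The paper's version is terser---it uses plain affine identity layers (so all padding variables collapse to a single copy of $\vv^{(L_1)}$) and argues that the constraint system therefore splits structurally into an $f_1$-side and an $f_2$-side---whereas you build an explicit hybrid feasible point and verify each window constraint by induction; the construction, the quantitative choice of $q$ (large enough that $r\le q+1$), and the conclusion are the same.
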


\cref{lem:pumping_lemma} extends \cref{lem:bound_cannot_improve} to cross-layer convex relaxations.
The idea behind its proof is similar to the pumping lemma: the original network $f_2 \circ f_1$ is pumped by adding dummy identity layers between $f_1$ and $f_2$. While cross-layer relaxations allow direct information exchange across  layers to improve bound preciseness, the pumped dummy layers block this information exchange, thereby disabling the relaxation from providing exact bounds. The formal proof is deferred to \cref{app:proof_lem:pumping_lemma}. We note that, however, only direct information exchange between $f_1$ and $f_2$ is blocked by this construction, and the cross-layer relaxation is free to provide exact bounds for both $f_1$ and $f_2$, which is easily done by $\gP_{\max(1, \lfloor \alpha L\rfloor)}$ when $\alpha \rightarrow 1$ for large enough $L$. This is also the key difference between layerwise and cross-layer relaxations. Nevertheless, merely blocking this information is sufficient to make the relaxation incomplete, as shown in \cref{thm:cross_layer}.

\begin{restatable}{thm}{crosslayerincompleteness}\label{thm:cross_layer}
Let $d\in \sN$ and let $X\subset \sR^d$ be a convex polytope.
For every $\alpha \in (0,1)$ and every constant $T>0$, there exists a network $f:\sR^d\rightarrow \sR$ such that 
$  \ell(f,\gP_{\max(1, \lfloor \alpha L\rfloor)}, X) \le \min f(X) - T$,   and a network $g:\sR^d\rightarrow \sR$ such that
 $u(g,\gP_{\max(1, \lfloor \alpha L\rfloor)}, X) \ge \max g(X) + T$.
\end{restatable}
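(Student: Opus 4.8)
The plan is to obtain \cref{thm:cross_layer} as a quantitative packaging of \cref{lem:pumping_lemma}, reusing the network family already constructed for \cref{thm:infinite_relaxation_error}. First I would fix subnetworks $f_1 \colon \sR^d \to \sR^{d'}$ and $f_2 \colon \sR^{d'} \to \sR$ exactly as in the proof of \cref{thm:infinite_relaxation_error} (for $d>2$, acting on two input coordinates of $X$ and ignoring the remaining ones), so that $f_1$ maps $X$ onto a set $U$ with $U \subsetneqq \conv(U)$ and $f_2$ attains its minimum over $\conv(U)$ at a point of $\conv(U)\setminus U$; after scaling the output affine layer of $f_2$ by a large enough positive constant, this gives $\min f_2(\conv(f_1(X))) \le \min f_2(f_1(X)) - T$.

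Next I would apply \cref{lem:pumping_lemma} to this pair $(f_1,f_2)$: it yields some $L > L_1 + L_2$ and an $L$-layer network $f$ with $f(\vx) = f_2\circ f_1(\vx)$ for all $\vx\in X$ and $\ell(f,\gP_{\max(1,\lfloor\alpha L\rfloor)},X) \le \min f_2(\conv(f_1(X)))$. Since $f|_X = f_2\circ f_1$ we have $\min f(X) = \min f_2(f_1(X))$, hence
\[
\ell(f,\gP_{\max(1,\lfloor\alpha L\rfloor)},X) \;\le\; \min f_2(\conv(f_1(X))) \;\le\; \min f_2(f_1(X)) - T \;=\; \min f(X) - T ,
\]
which is the first claim. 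For the second claim I would take $g := -f$, i.e.\ negate only the output affine layer: this leaves every ReLU layer and every relaxation $\gP_r$ applies to a group of layers unchanged except that the output coordinate block is negated, so the linear program for the upper bound of $g$ is the negation of the one for the lower bound of $f$, giving $u(g,\gP_{\max(1,\lfloor\alpha L\rfloor)},X) = -\ell(f,\gP_{\max(1,\lfloor\alpha L\rfloor)},X) \ge -\min f(X) + T = \max g(X) + T$. (Equivalently one mirrors the construction with an $f_2$ whose maximum, rather than minimum, is attained off $U$.)

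The genuinely hard content --- that inserting functional-equivalence-preserving dummy layers blocks the cross-$\lfloor\alpha L\rfloor$-layer relaxation from ever exploiting the gap $U \ne \conv(U)$ --- is already discharged inside \cref{lem:pumping_lemma}, so I expect no real obstacle here; the main thing to be careful about is bookkeeping. Specifically: the coordinate embedding lifting the $\sR^2$ instance to $\sR^d$ must preserve both $U \subsetneqq \conv(U)$ and the location of the extremal point of $f_2$; the scaling constant for $f_2$'s output layer must be chosen \emph{after} computing the strictly positive gap $\min f_2(f_1(X)) - \min f_2(\conv(f_1(X)))$, since scaling multiplies that gap by the constant; and one should note that the identity layers \cref{lem:pumping_lemma} inserts (realized as $\vz \mapsto \rho(\vz)-\rho(-\vz)$) indeed keep $f|_X = f_2\circ f_1$, which is precisely what the lemma asserts. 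With these checks the theorem follows by composition.
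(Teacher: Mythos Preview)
Your proposal is correct and follows essentially the same route as the paper: take the $(f_1,f_2)$ pair from the proof of \cref{thm:infinite_relaxation_error}, invoke \cref{lem:pumping_lemma} to produce the pumped network $f$, and read off the lower-bound gap from $\min f_2(\conv(f_1(X))) \le \min f(X)-T$; the upper-bound claim is then obtained by negating $f_2$, exactly as in the paper. Two cosmetic differences are worth noting: the paper's construction in \cref{thm:infinite_relaxation_error} actually projects to a single coordinate (via $\pi$) rather than using two, and the paper treats the inserted identity layers simply as $I_{d'}$ rather than the $\rho(\vz)-\rho(-\vz)$ realization you mention---neither affects the argument.
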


The proof is based on the construction when proving \cref{thm:infinite_relaxation_error}. Specifically, we take the construction therein and apply \cref{lem:pumping_lemma} to obtain a deeper network that has the same semantics. Then, since the convex hull and the exact output set of $f_1$ do not completely overlap, we use a similar argument as in the proof of \cref{thm:infinite_relaxation_error} to show that the $\gP_{\max(1, \lfloor \alpha L\rfloor)}$ relaxation is incomplete for every $\alpha \in (0,1)$. The formal proof is deferred to \cref{app:proof_thm:cross_layer}. This result directly extends to $\gP_{\max(k, \lfloor \alpha L\rfloor)}$ for every constant $k \in \sN^+$.

The implication of \cref{thm:cross_layer} is daunting: even though $\gP_{\max(1, \lfloor \alpha L\rfloor)}$ is much more powerful than every practical convex relaxation algorithm, it is still incomplete, and the bounding error can be arbitrarily large. This shows a hard threshold in the completeness of cross-layer convex relaxation verifiers: $\gP_{\lfloor \alpha L \rfloor}$ is complete when $\alpha=1$ and incomplete when $\alpha<1$.

\textbf{Beyond the ReLU activation.} While the incompleteness results we established so far are for ReLU networks, they can be naturally extended to non-polynomial activation functions such as sigmoid and tanh as follows. Recall that the extension to cross-layer incompleteness (\cref{thm:cross_layer}) is based on the pumping construction of \cref{lem:pumping_lemma} which extends to other activations, thus it suffices to show that layerwise incompleteness extends to non-polynomial activations. The proof relies on two observations: (i) there exists a network $f$ and an input set $X$ such that $ \conv(f(X)) \setminus f(X) \ne \emptyset$, thus there exists a nonempty open set $\Delta$ such that $\Delta \subseteq \conv(f(X)) \setminus f(X)$, and (ii) there exists another network $g$ such that $g(\conv(f(X)))$ attains its minimum only inside $\Delta$. Given a non-polynomial activation function, by the universal approximation theorem \citep{HORNIK1989}, the network class is dense in the space of continuous functions, thus the first condition is easy to satisfy. The second condition can be satisfied by constructing a network that approximates a continuous function that attains its unique minimum in $\Delta$. With these two core ingredients, the rest of the proof is similar to that of ReLU networks. We defer the formal statements and proofs to \cref{app:non-relu}. Further, while we focus on the absolute bounding error in the main text, the relative bounding error can also be shown to be arbitrarily large; we defer the formal statements and proofs to \cref{app:relative_error}.

\section{Making Multi-Neuron Verifiers Complete} \label{sec:augmented_multi_neuron}

We have shown in \cref{sec:no_cross_layer} and \cref{sec:cross_layer} that no multi-neuron relaxation can be complete. 
In this section, we study techniques to augment 
  multi-neuron methods into complete verifiers. 
First, \cref{sec:equivalent_transformation} shows 
that a layerwise multi-neuron relaxation, specifically $\gP_1$, can be turned into a complete verifier by an equivalence-preserving structural transformation. 
While this result does not directly yield a practical algorithm, 
an immediate corollary is that every \PL function can be expressed by a ReLU network that is 
exactly bounded 
by a layerwise multi-neuron relaxation, which is unattainable by single-neuron relaxations. 
Second, \cref{sec:input_split} shows a sufficient and necessary condition for $\gP_1$ to be complete under a convex polytope partition, and that single-neuron relaxations inherently require more partitions to be complete.

\subsection{Completeness via Network Transformations}\label{sec:equivalent_transformation}

In this section, we consider a strong layerwise multi-neuron relaxation, namely $\gP_1$, and show that it can be turned into a complete verifier by equivalence-preserving structural transformation of the network.
Given a network $f$ to be verified, we can always construct a network $g$ equivalent to $f$ but structurally more amenable to $\gP_1$,
so as to enable exact bounds.
We formally state it in \cref{thm:precise_bounds}.

\begin{restatable}{thm}{precisebounds}\label{thm:precise_bounds}
For $d,d'\in \sN^+$, let $f:\sR^{d}\rightarrow \sR^{d'}$ be a network and 
let $X\subseteq \sR^{d}$ be a convex polytope. 
There exists a network $g:\sR^{d}\rightarrow \sR^{d'}$ satisfying $g=f$ on $X$, such that
$
\ell(g, \gP_1, X) = \min f(X)
$
and
$
u(g, \gP_1, X) = \max f(X).
$
\end{restatable}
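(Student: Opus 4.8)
The plan is to build $g$ so that the \emph{only} place $\gP_1$ loses precision is at the last ReLU layer, and to observe that this particular loss is harmless. Write $g = W_L \circ \rho \circ g'$ in standard form, where $g'$ denotes the part of $g$ up to and including the affine layer feeding the last ReLU (with output variable $\vv^{(2L-3)}$), so that the network output $\vv^{(2L-1)} = W_L(\vv^{(2L-2)})$ is affine in the last post-ReLU layer $\vv^{(2L-2)} = \rho(\vv^{(2L-3)})$. The claim is: if $\gP_1$ is \emph{lossless} on $g'$, i.e.\ at every layer of $g'$ its feasible set equals the true reachable polytope, then the bounds for $g$ are already exact. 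Indeed, by losslessness the $\gP_1$-feasible set of $\vv^{(2L-3)}$ equals the true reachable polytope $R$; applying the $\gP_1$-constraint for the last ReLU and projecting, the feasible set of $\vv^{(2L-2)}$ becomes $\conv(\rho(R))$; and since an affine functional attains the same extrema over a convex hull as over its generating set, $\ell(g,\gP_1,X) = \min W_L(\rho(R)) = \min f(X)$, and likewise $u(g,\gP_1,X) = \max f(X)$. This argument is coordinatewise, so it covers arbitrary $d'$ and needs no sign condition on $W_L$ — the last ReLU may well have a nonconvex image.

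It then remains to realize $f$ on $X$ by a $g$ on which $\gP_1$ is lossless up to the last ReLU. Affine layers are always lossless (the relaxation uses exact equality constraints for them), and a ReLU layer $\vv^{(j+1)} = \rho(\vv^{(j)})$ is lossless exactly when $\rho$ maps the (exactly computed) reachable polytope $P$ of $\vv^{(j)}$ to a convex set; this holds in particular when $P$ is ``aligned'' with the coordinate hyperplanes of its unstable neurons, so that its negative part collapses onto a face of its positive part rather than overhanging it. The construction re-synthesizes $f$ block by block (one affine layer, then one ReLU layer) while maintaining the invariant that the reachable polytope always has such an aligned shape: starting from any finite-depth ReLU representation of $f$, one re-expresses each original layer while carrying forward only a carefully chosen affine image of the current data together with the freshly computed ReLU outputs, and picks the pre-activations so that the constraints linking each new pre-activation to the surviving coordinates force consecutive slices of the reachable polytope to be nested. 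A network computing $\max$ already illustrates this: the single coupling constraint $0 \le a + b \le 1$ between a pre-activation $a$ and a carried input $b$ keeps the reachable polytope aligned at every step, so $\gP_1$ is lossless there. One then checks that a single affine-then-ReLU block preserves the invariant, that the resulting $g$ equals $f$ on $X$, and that only polynomially many auxiliary neurons are introduced.

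The main obstacle is exactly this construction: showing that \emph{every} \PL function on $X$ admits a ReLU representation whose intermediate ReLU layers all have convex images. This is delicate because a generic ReLU layer sends a generic polytope to a nonconvex (``$L$''-shaped) image, and the obvious ``region-indicator'' constructions fail catastrophically — e.g.\ for $f(x)=|x|$ on $[-1,1]$, writing $f(x) = \max(-x - M\rho(x),\, x - M\rho(-x))$ and unfolding the outer $\max$ into a ReLU makes $\gP_1$ return a lower bound of order $-M$ instead of $0$, because the relaxation of the inner penalty terms escapes the convex hull of the reachable set and the error is then amplified by the outer weights. Hence the construction must be designed so that the accumulated relaxation never leaves the convex hull of the true reachable set before the final layer; proving that the aligned-polytope invariant can always be established and is preserved by each block is the technical heart of the argument, with the $\max$ case study as the guiding prototype, and stands in contrast to the single-neuron impossibility result of \citet{baader2024expressivity}.
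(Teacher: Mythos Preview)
Your proposal has a genuine gap: you never supply the construction you yourself call ``the technical heart of the argument.'' The reduction in your first paragraph is correct --- if $\gP_1$ is lossless on $g'$ (equivalently, every intermediate ReLU layer sends its true reachable polytope to a convex set), then the final affine layer recovers exact bounds from $\conv(\rho(R))$. But your second paragraph only gestures at an ``aligned-polytope invariant'' and points to the $\max$ case study as a prototype; no construction for general $f$ is given, and the property you need is extremely restrictive (as you note, a generic ReLU layer sends a generic polytope to a nonconvex set). As written this is a plan, not a proof.

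The paper takes a completely different and far shorter route that makes no attempt to control intermediate ReLU images: it simply widens every hidden layer of $f$ by $d$ extra neurons that copy the input $\vx$ through (via $\rho(t-u)+u=t$ on $X$), and then argues that because the last hidden layer $\vv^{(L-1)}$ literally contains $\vx$, the $\gP_1$ constraints pin down $\conv\{(\vx,g(\vx)):\vx\in X\}$, whose coordinate extrema are exact. If you try to fill in that step carefully, however, you will find it delicate: the \emph{original} coordinates of $\vv^{(L-1)}$ are tied to $\vx$ only through the accumulated relaxation, not exactly, so carrying $\vx$ alongside does not force the LP to see the true graph. Concretely, for the paper's own two-ReLU network from \cref{sec:no_cross_layer} (with $\min f=1$), the input-augmented version still admits the LP-feasible assignment $\vx=(-1,0)$, $\vv^{(1)}=(0.5,0.5,0,1)$, $\vv^{(2)}=(1,1,0,1)$ --- the midpoint of the graph points over $\vx_a=(-1,-1)$ and $\vx_b=(-1,1)$, hence in $\gC_2$ --- which then yields output $0$. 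So your instinct that additional structural control is needed is well-founded; but you have not yet provided it, and the paper's simpler construction does not obviously provide it either.
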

The high-level idea is as follows: $\gP_1$ considers constraints involving a single layer, 
thus we need to ensure sufficient information is passed through the hidden layers to the output layer for $\gP_1$ to provide exact bounds.
This is achieved by expanding the hidden layers of $f$ on width and making the additional neurons copy the input variable. %
In this way, the last layer contains sufficient information  of the input and as such $\gP_1$, which ensures the convex hull of the last layer's variables, can equivalently ensure the convex hull of $f[X]$. Detailed proof is deferred to \cref{app:proof_thm:precise_bounds}.

\cref{thm:precise_bounds} shows that $\gP_1$ is powerful enough for complete certification if an equivalence-preserving transformation is allowed. 
While calculating $\gP_1$ for the transformed network might be computationally expensive and potentially intractable, 
the core message from \cref{thm:precise_bounds} is that the expressivity of ReLU networks is no longer limited by the relaxation.
As mentioned in \cref{sec:introduction},
under single-neuron relaxations, %
the expressivity of exactly bounded ReLU networks is limited to 1-D continuous piecewise linear functions \citep{baader2024expressivity}: beyond 1-D, even 
the simple ``$\max$'' function in $[0,1]^2$ cannot be encoded by a ReLU network that is exactly bounded by single-neuron relaxations. In contrast, an immediate corollary of \cref{thm:precise_bounds} is that multi-neuron preserves the full expressivity of ReLU networks as representers of general \PL functions:
\begin{restatable}{cor}{multineuronexpressiveness} \label{prop:multi_neuron_expressiveness}
  For $d\in \sN^+$, let $f:\sR^{d}\rightarrow \sR$ be a continuous piecewise linear function, and
  let $X\subseteq \sR^{d}$ be a convex polytope. 
  There exists a network $g:\sR^{d}\rightarrow \sR$ satisfying $g=f$ on $X$, such that
  $
  \ell(g, \gP_1, X) = \min f(X)
  $
  and
  $
  u(g, \gP_1, X) = \max f(X).
  $
\end{restatable}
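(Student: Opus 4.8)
The plan is to obtain \cref{prop:multi_neuron_expressiveness} as a direct corollary of \cref{thm:precise_bounds}, linking the two through the classical fact that a function is exactly representable by a ReLU network if and only if it is \PL.

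First I would invoke this representation theorem: given a \PL function $f:\sR^{d}\to\sR$—meaning a globally continuous function that coincides with finitely many affine maps on the cells of a polyhedral partition of $\sR^{d}$—there exists a ReLU network $\hat f:\sR^{d}\to\sR$ with $\hat f(\vx)=f(\vx)$ for all $\vx\in\sR^{d}$; in particular $\hat f = f$ on $X$, so $\min \hat f(X)=\min f(X)$ and $\max \hat f(X)=\max f(X)$.

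Next I would apply \cref{thm:precise_bounds} with output dimension $d'=1$ to the network $\hat f$. This produces a network $g:\sR^{d}\to\sR$ with $g=\hat f$ on $X$ and $\ell(g,\gP_1,X)=\min \hat f(X)$, $u(g,\gP_1,X)=\max \hat f(X)$. Composing the two facts yields $g=f$ on $X$, $\ell(g,\gP_1,X)=\min f(X)$, and $u(g,\gP_1,X)=\max f(X)$, which is precisely the assertion.

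The only ingredient beyond \cref{thm:precise_bounds} is the ReLU-representability of \PL functions, which is standard and causes no difficulty; I do not expect a genuine obstacle here. The one point worth stating explicitly is the convention that \PL means finitely many affine pieces together with global continuity, so that the representation theorem applies verbatim; restricting to $d'=1$ in \cref{thm:precise_bounds} is harmless since a \PL function into $\sR$ is a special case of its hypothesis.
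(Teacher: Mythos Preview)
Your proposal is correct and matches the paper's own proof essentially step for step: the paper also first invokes the ReLU-representability of \PL functions (citing Arora et al.'s Theorem~2.1) to obtain a network $g'$ with $g'=f$ on $\sR^d$, and then applies \cref{thm:precise_bounds} to $g'$ to produce $g$ with the desired exact bounds. There is nothing to add.
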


\cref{prop:multi_neuron_expressiveness} shows that for every continuous piecewise linear function, 
there exists a ReLU network encoding it that can be exactly bounded by $\gP_1$. 
In practice, a multi-neuron relaxation much weaker  than $\gP_1$ may be enough for exact bounds. 
To illustrate this, 
we examine the concrete example of the ``$\max$'' function in $[0,1]^d$ and 
show that $\gM_1$ is sufficient to exactly bound a network encoding it in the following.

\textbf{Case study: $\max(x_1, x_2, \dots, x_d)$ can be exactly bounded by $\gM_1$ in $[0,1]^d$.} 
\begin{wrapfigure}{r}{0.4\linewidth}
	\centering
	\vspace{-0.2cm}
	\resizebox{\linewidth}{!}{
	\begin{tikzpicture}
    \tikzset{comp_node/.style={circle,draw=black,fill=white,inner sep=0pt,minimum size=13pt}}
    \def\xsep{2}
    \def\ysep{1}
    \begin{scope}
        \node[comp_node] (x1) at (0,0) {$x_1$};
        \node[comp_node] (x2) at (0,-\ysep) {$x_2$};
        \node[comp_node] (a) at ($(x1) + (\xsep,0)$) {$a$};
        \draw[-Stealth] (x1) -- (a) node[midway,above] {$1$};
        \draw[-Stealth] (x2) -- (a) node[midway,above] {$-1$};

        \node[comp_node] (b) at  ($(a) + (0,-\ysep)$) {$b$};
        \draw[-Stealth] (x2) -- (b) node[midway,below] {$1$};

        \node[comp_node] (c) at ($(a) + (\xsep,0)$) {$c$};
        \node[comp_node] (d) at ($(b) + (\xsep,0)$) {$d$};
        \draw[-Stealth] (a) -- (c) node[midway,above] {$\rho$};
        \draw[-Stealth] (b) -- (d) node[midway,below] {$\rho$};

        \node[comp_node] (f) at ($(c) + (\xsep,-.5\ysep)$) {$f$};
        \draw[-Stealth] (c) -- (f) node[midway,above] {$1$};
        \draw[-Stealth] (d) -- (f) node[midway,below] {$1$};
    \end{scope}
\end{tikzpicture}
	}
	\caption{A network encoding $f=\max(x_1, x_2)$.}
	\vspace{-0.4cm}
	\label{fig:max_case_study}
\end{wrapfigure}

First, consider the case $d=2$. The function range is $[0,1]$.
 We can represent the ``$\max$'' function by the ReLU network $f=x_2 + \rho(x_1-x_2)$, as illustrated in \cref{fig:max_case_study}. 
This network has width two (nodes $c$ and $d$) and one unstable neuron (node $c$).
Recall that $\gM_1$ computes the convex hull of $(\vx, \rho(\vx_i))$ for some $i$ for each ReLU layer.

We now show that $\gM_1$ computes the exact bounds of $f$. The input box is defined by the constraints $\{ x_1 \ge 0, x_1 \le 1, x_2 \ge 0, x_2 \le 1\}$. Besides, the constraints on the affine layers are $\{a = x_1 - x_2, b=x_2, f = c+d\}$. 
Under these constraints, we compute bounds of the neurons of the first affine layer by linear programming,
yielding $a \in [-1, 1]$ and $b \in [0,1]$. 
For the stable node $d$, the constraint is then $\{d = b\}$. 
For the unstable node $c$, the constraint is $\{c\ge 0, c \ge a, c \le 1-b, c \le a+b\}$, where the first two inequalities are based on the property of the ReLU function and the last two inequalities are based on the capability of $\gM_1$ to compute $\conv((a,b,c))$ given $\conv((a,b))=\{a \ge -b, a \le 1-b, b \in [0,1], a\in [-1, 1]\}$. Note that $\conv((a,b))$ is provided to $\gM_1$ because only a single affine layer is parsed before the ReLU layer. Therefore, we have $f = c+d = c + x_2 \ge 0 + x_2 \ge 0$ and $f = c+d = c + x_2 \le 1-b + x_2 \leq 1$.
Thus, $\gM_1$ returns the exact upper and lower bounds.
We remark that $k$-ReLU, equivalent to the Triangle relaxation in this case for every $k \ge 1$ since there is only one unstable neuron, induces on node $c$ the constraint set $\{c\ge 0, c \ge a, c \le 0.5a +0.5\}$. The resulting upper bound is $1.5$, which is inexact, consistent to \citet{baader2024expressivity}.

Based on the 2-D case, we extend the result to $[0,1]^d$. Indeed, we can rewrite ``$\max$'' in a nested form according to $\max(x_1, x_2, \dots, x_d) = \max(\max(x_1, x_2), \dots, x_d)$. By the previous argument, a multi-neuron relaxation can bound $u = \max(x_1, x_2)$ exactly. Note that $u$ has no interdependency with $x_3, \dots, x_d$, 
thus we can repeat the procedure above for $\max(u, x_3, \dots, x_d)$. By induction on $d$, a multi-neuron relaxation, namely $\gM_1$, can bound the output of a ReLU network expressing the ``$\max$'' function in $[0,1]^d$ exactly.

\subsection{Completeness via Convex Polytope Partitioning}\label{sec:input_split}

In this section, we discuss how to achieve completeness for general networks (without transformation)
by partitioning the input set into convex sub-polytopes.

Branch-and-bound  (BaB) is currently the most effective complete verifier. 
It progressively divides the current problem into subproblems, solves each subproblem recursively, and combines the results to yield the bounds. 
With a similar strategy---we call it polytope partitioning---$\gP_1$ can be turned into a complete verifier.
The idea is to partition the input set of every layer into smaller convex polytopes so that $\gP_1$ exactly bounds each of them. 
The exact bounds of the original input set can then be obtained by aggregating
bounds of the smaller polytopes. 
An algorithm is provided in \cref{app:polytope_partition}.

We first prove completeness, \ie polytope partitioning enables $\gP_1$ to calculate exact bounds.

\begin{restatable}{prop}{convexpolytopeprop}\label{prop:convex_polytope_prop}
  Let $L\in \sN$ and $d_0,d_1,\ldots,d_{L+1} \in \sN^+$. Consider an input set $X\subset \sR^{d_0}$ and a network $f
  = W_{L+1}\circ \rho \circ \cdots \circ \rho \circ W_1$,
  where $W_j:\sR^{d_{j-1}}\rightarrow \sR^{d_j}$  are the associated affine transformations for $j \in [L+1]$.
Denote the subnetworks of $f$ by $f_j:=W_{j+1}\circ \rho \circ \cdots \circ \rho \circ W_1$, for $j \in [L]$.
Assume %
$H_1, \ldots, H_\nu \subseteq X $ such that $H_1,\ldots, H_\nu$  are convex polytopes,
$f(X) = f(H_1)\cup \cdots \cup f(H_\nu)$,
and $f_j(H_k)$ is a convex polytope for all $j\in[L]$ and $k\in [\nu]$, then 
\[
\min f(X) = \min_{k\in [\nu]} \ell(f, \gP_1, H_k) \quad \quad \max f(X) = \max_{k\in [\nu]} u(f, \gP_1, H_k)
\]
\end{restatable}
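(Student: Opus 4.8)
The plan is to first show that $\gP_1$ is already exact on each piece, i.e.\ $\ell(f,\gP_1,H_k)=\min f(H_k)$ and $u(f,\gP_1,H_k)=\max f(H_k)$ for every $k\in[\nu]$, and then aggregate: since $H_k\subseteq X$ and $f(X)=\bigcup_{k\in[\nu]} f(H_k)$, the coordinatewise minimum of $f(X)$ equals $\min_{k\in[\nu]}\min f(H_k)$ and the coordinatewise maximum equals $\max_{k\in[\nu]}\max f(H_k)$; combined with the per-piece exactness this yields the two claimed identities.

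The core of the argument is a layer-by-layer tracking of the feasible sets produced when $\gP_1$ is run on $f$ with input domain $H_k$. Index the layers of $f$ by $\vv^{(1)}=W_1(\vx)$, $\vv^{(2m)}=\rho(\vv^{(2m-1)})$ for $m\in[L]$, and $\vv^{(2m+1)}=W_{m+1}(\vv^{(2m)})$ for $m\in[L]$, so that $\vv^{(2L+1)}=f(\vx)$ is the output variable and, more generally, $\vv^{(2j+1)}=f_j(\vx)$. Set $f_0:=W_1$, so $f_0(H_k)$ is a convex polytope (an affine image of a convex polytope), and by hypothesis $f_j(H_k)$ is a convex polytope for every $j\in[L]$. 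I would prove, by induction on $j\in\{0,1,\dots,L\}$, that the projection onto $\vv^{(2j+1)}$ of the feasible set of the constraint system generated by $\gP_1$ equals $f_j(H_k)$. The base case $j=0$ holds since $\gP_1$ places the exact constraint $\vv^{(1)}=W_1(\vx)$ on the first affine layer over the input polytope $H_k$. For the inductive step, assume the feasible set of $\vv^{(2j+1)}$ is $f_j(H_k)$; by \cref{lem:do_not_reduce_feasible_set} this set is already determined once layers $1,\dots,2j+1$ have been processed, so $\gP_1$ uses exactly $f_j(H_k)$ as its domain when it next relaxes the ReLU layer $\vv^{(2j+2)}=\rho(\vv^{(2j+1)})$, returning constraints equivalent to $\conv(\{(\vv,\rho(\vv)):\vv\in f_j(H_k)\})$. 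As linear projection commutes with $\conv$, the feasible set of $\vv^{(2j+2)}$ becomes $\conv(\rho(f_j(H_k)))$. Processing the next affine layer $\vv^{(2j+3)}=W_{j+2}(\vv^{(2j+2)})$ with its exact equality constraint, and using that affine maps commute with $\conv$ and that $f_{j+1}=W_{j+2}\circ\rho\circ f_j$, the feasible set of $\vv^{(2j+3)}$ becomes $W_{j+2}(\conv(\rho(f_j(H_k))))=\conv(f_{j+1}(H_k))=f_{j+1}(H_k)$, the last equality because $f_{j+1}(H_k)$ is a convex polytope. Taking $j=L$ shows the feasible set of the output variable is exactly $f(H_k)$, so the linear program returns $\ell(f,\gP_1,H_k)=\min f(H_k)$ and $u(f,\gP_1,H_k)=\max f(H_k)$; together with the aggregation step this proves the proposition.

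The step I expect to need the most care is the ReLU layer in the induction: after $\gP_1$ relaxes $\vv^{(2j+2)}=\rho(\vv^{(2j+1)})$, the feasible set $\conv(\rho(f_j(H_k)))$ of the post-activation variable can be a \emph{strict} superset of the true reachable set $\rho(f_j(H_k))$ whenever the latter is not convex, so exactness is momentarily lost; it is the immediately following affine layer that restores it, and only because the hypothesis ensures $f_{j+1}(H_k)$ is a convex polytope. Making this rigorous relies on (i) invoking \cref{lem:do_not_reduce_feasible_set} so that ``the feasible set of layer $i$'' is well defined and stable along $\gP_1$'s sweep, and (ii) the elementary facts that both coordinate projections and affine images commute with the convex-hull operator.
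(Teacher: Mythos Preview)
Your approach matches the paper's: establish per-piece exactness of $\gP_1$ on each $H_k$, then aggregate using $f(X)=\bigcup_k f(H_k)$. The paper packages the per-piece step into a short auxiliary lemma whose hypothesis is actually \emph{stronger} than what the proposition supplies---it assumes the image after \emph{every} layer (including each ReLU) is a convex polytope---whereas the proposition only guarantees convexity of $f_j(H_k)$, i.e.\ after each affine layer. Your induction handles precisely this gap: you allow $\conv(\rho(f_j(H_k)))\supsetneq\rho(f_j(H_k))$ at the ReLU step and then recover exactness at the following affine layer via $W_{j+2}(\conv(\rho(f_j(H_k))))=\conv(f_{j+1}(H_k))=f_{j+1}(H_k)$. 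So your argument is correct and slightly more careful than the paper's in its stated form.
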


\cref{prop:convex_polytope_prop} states that when we partition the input set into a finite collection of convex polytopes, 
such that each polytope remains as a convex polytope through the subsequent layers, then $\gP_1$ can 
return exact bounds on the input set. 
The proof of \cref{prop:convex_polytope_prop} (c.f. \cref{proof_prop:single_polytope_preciseness})  is based on investigating how affine and ReLU layers transform polytopes. Essentially, an affine transformation converts 
an input convex polytope into a convex polytope in the output space, and the ReLU function transforms a convex polytope into a union of convex polytopes. 
See \cref{fig:polytope_preserve} for a visualization.
We note that the conditions in 
\cref{prop:convex_polytope_prop} 
are not only 
sufficient, but also necessary: 
 if there is a sub-polytope that is no longer a 
 convex polytope after some layer, then the convex hull of the output set of that layer 
 on this sub-polytope
 is strictly larger than the actual feasible set. 
 From the discussion in \cref{sec:no_cross_layer}, we have already known $\gP_1$ cannot return exact bounds for general networks when this occurs.

A key question with partitioning is: what is the complexity of partitioning, that is, the number of subproblems to be solved?
In particular, how does it compare with BaB when single-neuron relaxations are used for bounding? Before answering this question, we first formally define the (worst-case) partition complexity.

\begin{defi} \label{def:partition_complexity}
    Let $\gP$ be a complete certification method, $f$ a network, and $X$ an input set. 
  Define the partition complexity of $\gP$ on $f$ for $X$, denoted by $\text{\#Partition}(\gP, f,X)$, to be 
  the maximum number of subproblems 
  $\gP$ needs to solve 
  to compute the exact bounds of $f$ on $X$. 
 \end{defi}

\begin{defi} \label{def:activation_pattern}
  Let $f$ be a ReLU network with $k$ ReLU neurons, and $X$ be an input set. 
  For $x \in X$, 
  the activation pattern of $f$ at $x$ is defined as the binary vector $\va \in \{-1,1\}^k$ 
  such that $\va_i = 1$ if the $i$-th ReLU neuron is activated at $x$, and $\va_i = -1$ otherwise. 
  Denote the number of distinct activation patterns of $f$ on $X$ by $\gA(f, X)$.
\end{defi}

 \textbf{Examples.} BaB with \deeppoly \citep{SinghGPV19} as the bounding method has partition complexity equal to $\gA(f, X)$, since enumerating all possible activation patterns is both sufficient and necessary for exact bounds. BaB with \ibp \citep{GowalDSBQUAMK18} as the bounding method has infinite partition complexity for the network $x_1 + \rho(x_2 - x_1)$, which encodes the ``$\max$'' function on $[0,1]^2$. To see this, assume there exists a finite partition of the input set such that \ibp returns exact bounds with this partition. Taking the right-upper partition, we can always find a subset of it in the form $B = [p,1] \times [q,1]$ for some $p,q <1$. Then, the \ibp upper bound for $x_2 - x_1$ on $B$ is $1-p$, the \ibp upper bound for $\rho(x_2 - x_1)$ is $1-p$, and the \ibp upper bound for $x_1$ is $1$. Therefore, the \ibp upper bound for $f$ on $B$ is at least $2-p$, which is inexact compared to the exact upper bound $1$.

In the following, we compare the partition complexity of BaB, when single-neuron relaxations and multi-neuron relaxations are used for bounding, respectively, showing that they are separated by $\gA(f, X)$. This result holds for every single-neuron and multi-neuron relaxation in general, and does not require any assumption on the network or input set.

\begin{restatable}{prop}{partition} \label{prop:partition}
Let $\gS$ be some single-neuron relaxation and $\gM$ be some multi-neuron relaxation. For every ReLU network $f$ and every input set $X$, $\text{\#Partition}(\text{BaB}(\gM), f, X) \le \gA(f, X) \le \text{\#Partition}(\text{BaB}(\gS),f, X)$.
   
\end{restatable}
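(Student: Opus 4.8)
The plan is to prove the two inequalities $\text{\#Partition}(\text{BaB}(\gM), f, X) \le \gA(f, X)$ and $\gA(f, X) \le \text{\#Partition}(\text{BaB}(\gS), f, X)$ separately, using in both cases the fact that a single activation pattern of $f$ carves out a convex sub-polytope of $X$ on which $f$ restricts to an affine function.

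\textbf{Upper bound for $\text{BaB}(\gM)$.} For each activation pattern $\va \in \{-1,1\}^k$ that is realized on $X$, let $X_\va \subseteq X$ be the (closed) set of inputs with that pattern; $X_\va$ is the intersection of $X$ with finitely many halfspaces (one per neuron, determined by the sign of its preactivation), hence a convex polytope, and these polytopes cover $X$ with $f(X) = \bigcup_\va f(X_\va)$. On $X_\va$ every ReLU neuron acts linearly (as the identity or as zero), so $f$ restricted to $X_\va$ is affine, and hence so is every subnetwork $f_j$ restricted to $X_\va$; in particular $f_j(X_\va)$ is a convex polytope for every $j$. I would then invoke \cref{prop:convex_polytope_prop} with the partition $\{X_\va\}$: since $\gM$ is at least as precise as $\gP_1$ (it is assumed at least as precise as single-neuron, and on an affine restriction even the single-neuron/Triangle constraints on ReLU layers are exact because there are no unstable neurons), $\gM$ returns the exact bounds $\min f(X_\va)$, $\max f(X_\va)$ on each piece, and aggregating over the $\gA(f,X)$ pieces gives the exact bounds of $f$ on $X$. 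Hence $\text{BaB}(\gM)$ needs at most $\gA(f,X)$ subproblems. A small point to be careful about: BaB splits on neurons, so the subproblems it can reach are exactly the regions cut out by fixing a subset of neuron signs; splitting on all $k$ neurons yields exactly the regions $X_\va$ (empty ones being pruned), so this partition is attainable by BaB.

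\textbf{Lower bound for $\text{BaB}(\gS)$.} Here I would argue that any partition on which a single-neuron relaxation is exact must refine the activation-pattern partition, i.e. must use at least $\gA(f,X)$ pieces. The clean way is to reuse the mechanism from \cref{sec:no_cross_layer}: if a BaB leaf corresponds to a region $R \subseteq X$ on which two or more activation patterns of $f$ occur, then some ReLU neuron is unstable on $R$, so (generically) $f_1(R)$ — the image through the first layer up to and including the ReLU in question — is a nonconvex union whose convex hull strictly contains it, and by \cref{lem:bound_cannot_improve} (which applies verbatim to $\gS$ since $\gP_1$ dominates $\gS$) the single-neuron bound on $R$ is governed by $f_2(\conv(f_1(R)))$, which can strictly over/under-shoot the true extremum on a suitably chosen network. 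To make this fully rigorous for \emph{every} single-neuron relaxation and \emph{every} $f$, rather than just a witness network, I would instead phrase it as: a leaf region $R$ on which $f$ is not affine cannot in general be certified exactly by $\gS$, because the Triangle (hence any weaker single-neuron) relaxation is strictly loose on any layer containing an unstable neuron unless that looseness happens to not propagate — and the cleanest uniform statement is exactly the ``$\gA(f,X)$ is necessary'' half of the \deeppoly example already noted in the text. So BaB$(\gS)$ must split until each leaf is single-pattern, requiring $\ge \gA(f,X)$ leaves.

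\textbf{Main obstacle.} The delicate part is the lower bound, specifically making ``a single-neuron relaxation cannot be exact on a multi-pattern region'' hold \emph{uniformly over all networks $f$}, since the proposition claims this for every $f$ and every single-neuron $\gS$. For some networks $f$, a region $R$ might contain two activation patterns yet still have $f(R)$ convex and reachable exactly (e.g. if the unstable neuron's contribution cancels), so the naive claim ``$\#\text{leaves} \ge \gA(f,X)$'' could seem to fail. The resolution I would pursue: reinterpret $\gA(f,X)$ not as a lower bound that every exact $\gS$-partition must meet piece-by-piece, but show the required refinement at the level of \emph{which regions BaB can produce} — BaB's branching is over neuron signs, and the argument is that $\text{BaB}(\gS)$ in the worst case (over inputs/weights within the given $f,X$) is forced down to depth $k$; if a cleaner counting argument is available, tie the number of reachable non-prunable leaves to $\gA(f,X)$ directly. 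I would lean on the two displayed ``Examples'' preceding the proposition (DeepPoly BaB has complexity exactly $\gA(f,X)$) as the template, and argue monotonicity: any single-neuron $\gS$ is no more precise than DeepPoly-style bounds on a single-pattern leaf (both are exact there) but also no more precise on multi-pattern leaves, so $\text{BaB}(\gS)$ cannot terminate earlier than $\text{BaB}(\deeppoly)$, giving $\text{\#Partition}(\text{BaB}(\gS),f,X) \ge \gA(f,X)$.
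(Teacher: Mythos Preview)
Your upper-bound argument is essentially the paper's: partition $X$ by activation pattern, observe that on each piece every ReLU is stable so even the single-neuron (Triangle) relaxation is exact, hence $\gM$ is exact, and count the $\gA(f,X)$ pieces. The paper phrases this more tersely as ``$\gM$ is at least as tight as \deeppoly, and BaB(\deeppoly) has complexity $\gA(f,X)$,'' but the content is identical.

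Your lower-bound argument has a real gap. You correctly identify the obstacle---a single-neuron relaxation might \emph{happen} to be exact on a multi-pattern region because of cancellation---but your monotonicity fix runs in the wrong direction. You write ``any single-neuron $\gS$ is no more precise than DeepPoly-style bounds \ldots\ on multi-pattern leaves''; this is false in general, since Triangle is a single-neuron relaxation strictly more precise than \deeppoly. From a lower bound on BaB(\deeppoly) you cannot conclude a lower bound on BaB($\gS$) for a tighter $\gS$. The paper instead reduces to the \emph{most precise} single-neuron relaxation, Triangle: every $\gS$ is dominated by Triangle, so $\text{\#Partition}(\text{BaB}(\gS),f,X) \ge \text{\#Partition}(\text{BaB}(\triangle),f,X)$, and it suffices to lower-bound the latter. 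The cancellation worry is then dispatched not by showing Triangle \emph{always} fails on a multi-pattern leaf, but by invoking the worst-case nature of \cref{def:partition_complexity}: the only \emph{guarantee} that Triangle is exact on a subproblem is that every neuron is stable there, so in the worst case each leaf must carry a single activation pattern; since the leaves cover $X$, there are at least $\gA(f,X)$ of them.
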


For BaB, 
enumerating all possible activation patterns is necessary 
to obtain exact bounds even with  the most precise single-neuron bounding algorithm.
In contrast, 
\cref{prop:partition} states that the activation pattern provides an upper bound on the polytope partition 
complexity. The proof is deferred to \cref{proof_prop:single_polytope_preciseness}.
Although \cref{prop:partition} establishes a clear separation on partition complexity between BaB with single-neuron relaxations and multi-neuron relaxations, 
the upper bound can be quite conservative for powerful multi-neuron relaxations such as $\gP_1$.
We show this with a concrete example in \cref{app:example_partition}, in which $\gP_1$ with polytope partition has \emph{exponentially smaller time complexity} than BaB with \deeppoly.

\begin{figure*}
  \centering
  \resizebox{\linewidth}{!}{
  \begin{tikzpicture}
    \def\scalemath{0.45}
    \def\opacity{0.8}

	\def\a{0.4}
  
	\def\b{1.4}
	\def\al{0.15}
	\coordinate (a) at ({-0.4},{0.4});
    \coordinate (b) at ({0.4},{0.4});
    \coordinate (c) at ({0.4},{-0.4});
    \coordinate (d) at ({-\a},{-\a});
    \coordinate (e) at ({-0.4},{0.2});
    \coordinate (f) at ({-0.1},{0.4});

	\fill[thick, my-blue!90]  (f) -- (b) -- (c) -- (d) -- (e) -- cycle;
    \fill[thick, cred!90]  (a) -- (f) -- (e) -- cycle;
	
    \node at (1.7,0.0) {$\Rightarrow$};
    \node[scale =\scalemath] at (1.7,0.25) {$\vy = \mA_1\vx + \vb_1$};

    \coordinate (a1) at ({2.48},{-0.2});
    \coordinate (b1) at ({2.88},{0.6});
    \coordinate (c1) at  ({3.28},{0.2});
    \coordinate (d1) at ({2.88},{-0.6});
    \coordinate (e1) at ({3},{0.48});
    \coordinate (f1) at ({2.725},{0.3});

	\fill[fill=my-blue!90] (a1) -- (f1) -- (e1) -- (c1) -- (d1)-- cycle;
    \fill[fill=cred!90] (f1) -- (e1) -- (b1) -- cycle;

    \node at (4.7,0.0) {$\Rightarrow$};
    \node[scale =\scalemath] at (4.7,0.25) {$\vz = \rho(\vy)$};

    \coordinate (a2) at ({6},{0});
    \coordinate (b2) at ({6},{0.48});
    \coordinate (c2) at  ({6.28},{0.2});
    \coordinate (d2) at ({6.18},{0});
    \coordinate (e2) at ({6},{0.3});
    \coordinate (f2) at ({6},{0.6});
    \fill[my-blue!90]  (a2) -- (b2) -- (c2) -- (d2)-- cycle;
    \draw[ultra thick, color=cred!90]  (e2) -- (f2);
    \node at (7.7,0.0) {$\Rightarrow$};
    \node[scale = 0.5] at (7.7,0.25) {$\vv = \mA_2 \vz +\vb_2$};

    \coordinate (a3) at ({9},{-0.4});
    \coordinate (b3) at ({8.52},{0.08});
    \coordinate (c3) at  ({9.08},{0.08});
    \coordinate (d3) at ({9.18},{-0.22});
    \coordinate (e3) at ({8.7},{-0.1});
    \coordinate (f3) at ({8.4},{0.2});
    \fill[fill=my-blue!90] (a3) -- (b3) -- (c3) -- (d3)  -- cycle;
    \draw[thick, cred!90]  (e3) -- (f3);

    \draw[-Stealth] (-0.8, 0) -- (0.8, 0) node[right,scale=\scalemath] {$x_1$};
	\draw[-Stealth] (0, -0.8) -- (0,0.8) node[above,scale=\scalemath] {$x_2$};
    \draw[-Stealth] (3-0.8, 0) -- (3+0.8, 0) node[right,scale=\scalemath] {$y_1$};
	\draw[-Stealth] (3, -0.8) -- (3, 0.8) node[above,scale=\scalemath] {$y_2$};
    \draw[-Stealth] (6-0.8, 0) -- (6+0.8, 0) node[right,scale=\scalemath] {$z_1$};
	\draw[-Stealth] (6, -0.8) -- (6, 0.8) node[above,scale=\scalemath] {$z_2$};
    \draw[-Stealth] (9-0.8, 0) -- (9+0.8, 0) node[right,scale=\scalemath] {$v_1$};
	\draw[-Stealth] (9, -0.8) -- (9, 0.8) node[above,scale=\scalemath] {$v_2$};
\end{tikzpicture}
  }
  \caption{A partition of the input set where every part remains as a convex polytope through the layers.}
  \label{fig:polytope_preserve}
\end{figure*}

\section{Discussion} \label{sec:discussion}

\paragraph{Result Contexts}
Our results (both negative and positive) are limited to the neural network verification setting considered in this paper. 
Under the given setting, the \emph{universal convex barrier} (\cref{sec:no_cross_layer} and \cref{sec:cross_layer}) means that for all fixed convex relaxation which may have arbitrarily large but finite resources, for all non-polynomial activation functions, all non-empty input convex polytopes and all error thresholds, there always exists a network with the activation, such that the bounding error of the relaxation on this network and input convex polytope is greater than the error threshold. This holds for both the absolute bounding error and the relative bounding error. The positive results (\cref{sec:augmented_multi_neuron}) are further limited to finite ReLU networks.

\paragraph{Implications}
We established a universal convex barrier, 
essentially ruling out the possibility of complete verifiers based solely on any convex relaxation. 
This implies that convex relaxations should only be applied as a subroutine in a complete verification method, such as BaB. 
All existing BaB methods apply single-neuron relaxations 
 for bounding the subproblems. However, our results suggest that 
  subproblem bounding with multi-neuron relaxations 
has strictly lower partition complexity.
 This indicates potential interest in applying efficient multi-neuron relaxations to bound the subproblems during BaB.  
  In addition, existing efforts on training certified models focus on single-neuron relaxations, despite the fact that none of the single-neuron relaxations can provide exact bounds for any networks encoding complex functions. 
In contrast,
 results established in \cref{sec:equivalent_transformation} suggest that certified training with multi-neuron relaxations may be more effective, 
  as they can provide exact bounds for every \PL function encoded by some networks. We leave the further investigation of practical algorithms to future work.

\section{Conclusion}
We conducted the first in-depth study on the expressiveness of multi-neuron convex relaxations. 
We extended the established single-neuron convex barrier to a \textit{universal convex barrier} for multi-neuron relaxations, 
showing that they are inherently incomplete regardless of the resources allocated. On the positive side, we showed that 
completeness can be achieved by 
 multi-neuron relaxations
when augmented with equivalency-preserving network transformations or convex polytope partitioning, and established clear separations between multi-neuron and single-neuron relaxations in both cases. 
Our findings lay a solid  foundation for multi-neuron relaxations and point to new directions for certified robustness.

\section*{Acknowledgments}

This work has been done as part of the EU grant ELSA (European Lighthouse on Secure and Safe AI, grant agreement no. 101070617) and the SERI (Swiss State Secretariat for Education, Research and Innovation) grant SAFEAI (Certified Safe, Fair and Robust Artificial Intelligence, contract no. MB22.00088). Views and opinions expressed are however those of the authors only and do not necessarily reflect those of the European Union or European Commission. Neither the European Union nor the European Commission can be held responsible for them. 

\message{^^JLASTBODYPAGE \thepage^^J}

\bibliography{references}
\bibliographystyle{iclr2026_conference}

\appendix

\section{Related Work} \label{sec:related_work}

\paragraph{Neural Network Certification} 
We focus on the deterministic certification of neural networks, which provides a deterministic guarantee on the robustness of a model rather than a probabilistic one \citep{CohenRK19,LecuyerAG0J19,SalmanSYKK20, carlinicertified,chenhao25acr,sun2025dual}.
Existing methods for deterministic neural network certification can be categorized into complete and incomplete methods. 
Complete methods commonly rely on solving a mixed-integer program \citep{TjengXT19,AndersonHMTV20,TjandraatmadjaA20,TsayKTM21}
to provide exact bounds for the output of a network.  The state-of-the-art complete method \citep{ZhangWXLLJ22,ShiJKZJH24,XuZ0WJLH21,FerrariMJV22} is based on solving the mixed integer program with branch-and-bound \citep{BunelLTTKK20} on the integer variables.
These methods are naturally computationally expensive and do not scale well. Incomplete methods, on the other hand, provide sound but inexact bounds, based on convex relaxations of the feasible output set of a network. \citet{XuS0WCHKLH20} characterizes widely-recognized single-neuron convex relaxations \citep{MirmanGV18,WongSMK18,ZhangWCHD18,SinghGPV19} 
by their induced affine constraints, where the bounds are yielded by efficient but not necessarily optimal solvers. However, \citet{SalmanY0HZ19} empirically identify a single-neuron convex barrier, preventing 
single-neuron relaxations from providing exact bounds for general ReLU networks, even with costly optimal solvers. To bypass this barrier, multi-neuron 
relaxations \citep{SinghGMPV18,ZhangWXLLJ22,MullerMSPV22} have been proposed and achieved higher precision empirically.

\paragraph{Multi-neuron Relaxations in Practice} 
To bypass the single-neuron barrier, multi-neuron 
relaxations \citep{SinghGMPV18,ZhangWXLLJ22,MullerMSPV22} have been proposed, achieving higher precision empirically.
In particular, \cite{SinghGPV19B} and \citet{MullerMSPV22} are 
looser versions of $\gP_1$ discussed in this paper; \citet{ZhangWXLLJ22} is a looser version of $\gP_L$.
 \citet{FerrariMJV22} combine multi-neuron relaxations with BaB and find that applying multi-neuron relaxations before BaB yields a superior overall performance. 
 These practical applications motivate us to rigorously study the fundamental limit of multi-neuron relaxations. 
 Furthermore,  
the certified training community \citep{MullerE0V23, MaoM0V23, MaoBV24}  has already employed multi-neuron relaxations in verification, but not yet in training. 
This also motivates us to explore the possibility of combining multi-neuron with certified training.

\paragraph{Certification with Convex Relaxations} 
Existing work on the certification with convex relaxations focuses on the expressiveness of single-neuron relaxations. We distinguish three convex relaxation methods typically considered by theoretical work: Interval Bound Propagation (\ibp) \citep{MirmanGV18,GowalDSBQUAMK18}, which ignores the interdependency between neurons and use intervals $\{[a,b] \mid a,b \in \sR\}$ for relaxation; 
Triangle relaxation \citep{WongK18}, which approximates the ReLU function by a triangle in the input-output space; 
and multi-neuron relaxations \citep{SinghGMPV18,ZhangWXLLJ22,MullerMSPV22} which considers a group of ReLU neurons jointly in a single affine constraint.
On the positive side,
\citet{BaaderMV20} show the universal approximation theorem for certified models,
stating that for every %
\PL function $f: \sR^n \rightarrow \sR$ and every $\epsilon>0$,
there exists a ReLU network that approximates $f$, for which %
\ibp
 provides bounds within error $\epsilon$.
This result is generalized to other activations by \citet{WangAPJ22}.
However, \citet{MirmanBV22} shows that there exists a \PL function
for which \ibp analysis of every finite ReLU network encoding this function provides inexact bounds.
Further, \citet{MaoMFV24} shows that a strong regularization on the parameter signs is required for \ibp to provide good bounds.
Beyond \ibp, \citet{baader2024expressivity} show that even Triangle,
the most precise single-neuron relaxation,
cannot exactly bound any ReLU network that encodes the ``$\max$''
function in $\sR^2$,
although it is provably more expressive than \ibp in $\sR$. While \citet{baader2024expressivity} also shows that every ReLU network with a single hidden layer can be exactly bounded by multi-neuron relaxations with sufficient budget, the theoretical properties of multi-neuron relaxations in the certification of general ReLU networks remain unknown. We remark that this review is not exhaustive, especially regarding convex relaxations beyond neural network certification, and refer readers to \citet{HuchetteMST23} for a more comprehensive survey on MILP formulations, polyhedral geometry and expressiveness of ReLU networks.

\section{Notation} \label{app:notation}
We use lowercase boldface letters to denote vectors and uppercase boldface letters to denote matrices.
For the vector $\vx$, $\vx_i$ denotes its $i$-th entry and $\vx_I$ is
the subvector of $x$ with entries corresponding to the indices in the set $I$.
$I_N$ is the $N\times N$ 
identity matrix and $\mathbf{1}_N$ and $\mathbf{0}_N$  denotes the $N$-dimensional column vector with all entries equal to $1$ and $0$, respectively.
$\ve_i$ is the column vector with the $i$-th element taking value $1$ and all other elements $0$.
For $\mathbf{1}$ and $\mathbf{0}$ without subscript, we understand them to be vectors of appropriate dimensions according to the context. 
For matrices $\mA_1,\ldots,\mA_n$, we designate
the block-diagonal matrix with diagonal element-matrices  $\mA_1,\ldots,\mA_n$, by $\diag(\mA_1,\ldots,\mA_n)$.

For a set $H$, we denote the convex hull of $H$ by $\conv H$. We represent the ReLU function as $\rho(\vx) = \max(\vx, \vzero)$.
For two vectors $\va,\vb\in \sR^d, \va \leq \vb$ denotes elementwise inequality.

The real set is denoted by $\sR$, the natural numbers by $\sN$, the positive integers by $\sN^+$, and the $d$-dimensional real space by $\sR^d$. For a set $S$, $|S|$ denotes the cardinality of $S$, which is the number of elements in $S$. Given $r \in \sN^+$, $[r]$ denotes the set $\{1, 2, \ldots, r\}$. For a function $f$ and input domain $X$, we use $f(X)$ to denote its range $\{f(\vx) \mid \vx \in X\}$ and $f[X]$ to denote its image $\{ (\vx, f(\vx)) \mid \vx \in X\}$.

We use $\gC(\vx)$ to denote a set of affine constraints on $\vx$, i.e., $\gC = \{ \mA \vx + \vb \le \vzero\}$ for some matrix $\mA$ and some vector $\vb$. For two sets of constraints $\gC_1(\vx) = \{ \mA^{(1)} \vx + \vb^{(1)} \le \vzero\}$ and $\gC_2(\vx) = \{ \mA^{(2)} \vx + \vb^{(2)} \le \vzero\}$, $\gC_1 \land \gC_2 = \{ \mA^{(1)} \vx + \vb^{(1)} \le \vzero \land \mA^{(2)} \vx + \vb^{(2)} \le \vzero\}$ denotes a combination of the two sets of constraints, i.e., their feasible sets are intersected.

Given a set $H = \{(\vx, \vy) \mid (\vx, \vy) \in H\}$, we denote the projection of $H$ onto the $\vx$-space by $\pi_{\vx}(H) = \{ \vx \mid \exists \vy: (\vx, \vy) \in H\}$ and the projection onto the $\vy$-space by $\pi_{\vy}(H) = \{ \vy \mid \exists \vx: (\vx, \vy) \in H\}$. For a feasible set $\gC$ defined by the constraint set $\gC(\vx, \vy)$, $\pi_{\vx}(\gC)$ is the set of values of $\vx$ that satisfy the constraints in $\gC$.

For a function $f: \sR^{\din} \rightarrow \sR$, an input convex polytope $X \in \sR^{\din}$ and a convex relaxation $\gP$, the lower bound of $f$ on $X$ under $\gP$ is denoted by $\ell(f, \gP, X)$ and the upper bound is denoted by $u(f, \gP, X)$. Concretely, let $\gC(\gP)$ be the constraint set induced by $\gP$ and $v \in \sR$ be the output variable, then $\ell(f, \gP, X) = \min \pi_{v}(\gC(\gP))$ and $u(f, \gP, X) = \max \pi_{v}(\gC(\gP))$.

We call neurons that switch their activation states within the input set as unstable, otherwise call it stable.

\section{Example Illustration} \label{app:illustration}

\begin{figure*}
	\centering
	\resizebox{\linewidth}{!}{
	\begin{tikzpicture}
    \tikzset{comp_node/.style={circle,draw=black,fill=white,inner sep=0pt,minimum size=13pt}}
    \def\xsep{2}
    \def\ysep{1}
    \begin{scope}
        \node[comp_node] (x) at (0,0) {$x$};
        \node[comp_node] (a) at ($(x) + (\xsep,\ysep)$) {$a$};
        \node[comp_node] (b) at  ($(x) + (\xsep,-\ysep)$) {$b$};
        \draw[-Stealth] (x) -- (a) node[midway,above] {$1$};
        \draw[-Stealth] (x) -- (b) node[midway,below] {$1$};
        \node[comp_node] (c) at ($(a) + (\xsep,0)$) {$c$};
        \node[comp_node] (d) at ($(b) + (\xsep,0)$) {$d$};
        \draw[-Stealth] (a) -- (c) node[midway,above] {$\rho$};
        \draw[-Stealth] (b) -- (d) node[midway,below] {$\rho$};
        \node[comp_node] (f) at ($(c) + (\xsep,-\ysep)$) {$f$};
        \draw[-Stealth] (c) -- (f) node[midway,above] {$1$};
        \draw[-Stealth] (d) -- (f) node[midway,below] {$-1$};
    \end{scope}

    \begin{scope}
        \node (left_text) at ($(x)+(-4,0)$) {
            $ \gC_s(a,c) = \left\{
            \begin{bmatrix}
                a-c \\
                -c \\
                c-\frac{1}{2}(a+1) 
            \end{bmatrix} \le 0 \right\}
            $
        };
    \end{scope}

    \begin{scope}
        \node (right_text) at ($(f)+(5,0)$) {
            $ \gC_m(a,b,c,d) = \gC_s(a,c) \cap \gC_s(b,d) \cap \left\{
            \begin{bmatrix}
                c-d \\
                a-b 
            \end{bmatrix} = 0 \right\}
            $
        };
    \end{scope}
\end{tikzpicture}
	}
	\caption{Visualization of the single-neuron and multi-neuron relaxations for a network encoding $f(x)=0$.}
	\label{fig:numeric_example}
\end{figure*}

This section contains a toy example to illustrate the concepts we introduced,
namely the ReLU network $\rho(x) - \rho(x)$ encoding the zero function $f(x) = 0$ with input $x \in [-1, 1]$. This network is visualized in \cref{fig:numeric_example}.
The affine constraints are as follows: (i) for the input convex polytope,
we have $\{x \ge -1, x \le 1\}$;
(ii) for affine layers, we have $\{a=x,b=x,f = c-d\}$;
(iii) for the ReLU layer, a single neuron relaxation (Triangle) will have $\gC_{\text{s}}(a,c)\land \gC_{\text{s}}(b,d)$, and a multi-neuron relaxation ($\gM_2$) will have $\gC_{\text{m}}(a,b,c,d)$. 
In this case, a multi-neuron relaxation successfully solves that the upper bound and lower bound of $f$ are zero, 
while a single-neuron relaxation solves an inexact upper bound $1$ and an inexact lower bound $-1$.

\section{Pseudo-algorithm for Polytope Partition} \label{app:polytope_partition}

In this section, we present a pseudo-algorithm for the polytope partition in \cref{sec:input_split}.
It serves as a high-level description of the polytope partitioning algorithm.
The actual implementation in practice may vary depending on the specific problem and the desired performance.

\begin{algorithm}
  \caption{Polytope Partition}
  \label{alg:polytope_partition}
  \begin{algorithmic}
    \State {\bfseries Input:} network $f$, input convex polytope $X$
    \State {\bfseries Output:} $u = \max_{x \in X} f(x)$ and $\ell = \min_{x \in X} f(x)$

    \State Initialize $H \gets \{(X, X)\}$
    \For{each layer $f_j$ in $f$}
      \State Initialize with a convex polytope collection $H' = \emptyset$
      \For{each pair $(H_k, S_k) \in H$}
        \State Compute the output of $f_j$ on $S_k$, denoted by $f_j(S_k)$
        \If{$f_j(S_k)$ is a convex polytope}
          \State Add $(H_k, f_j(S_k))$ to $H'$
        \Else
          \State Decompose $(H_k, S_k)$ into $\nu$ convex polytopes $H_{k_1}, \ldots, H_{k_\nu}$ and the images $S_{k_1}, \ldots, S_{k_\nu}$, such that $f_j(S_{k_i})$ is a convex polytope for $i=1,\ldots,\nu$, where $\nu$ should be as small as possible
          \State Add $(H_{k_i}, f_j(S_{k_i}))$ to $H'$ for $i=1,\ldots,\nu$
        \EndIf
      \EndFor
      \State Set $H = H'$
    \EndFor
    \State Initialize $\ell = +\infty$ and $u = -\infty$
    \For{each convex polytope $H_k \in H$}
      \State Update $\ell = \min(\ell, \ell(f, \gP_1, H_k))$
      \State Update $u = \max(u, u(f, \gP_1, H_k))$
    \EndFor
    \State {\bfseries return} $u$ and $\ell$
  \end{algorithmic}
\end{algorithm}

\paragraph{Example.}
Running \cref{alg:polytope_partition} on the ``$\max$'' example in \cref{sec:equivalent_transformation}, the input box $[0,1]^d$ is always mapped to a convex polytope as it passes through 
the network layers. Therefore, the partition complexity is $1$.

We remark that there are two steps in the algorithm that might require high computational complexity in practice:
(i) the partitioning of a set into convex polytopes, and (ii) the merging of convex polytopes. 
The partitioning step is necessary because the output of a ReLU network may not be a convex polytope, 
and we need to partition it into smaller convex polytopes to compute the bounds. 
The merging step is to merge redundant convex polytopes to reduce the number of subproblems. 
To design a practical algorithm with a low running time complexity 
is beyond the scope of this paper, 
and we leave it to the future work.

\section{Deferred Proofs in \cref{sec:no_cross_layer}}\label{app:proofs_6}
\subsection{Proof of \cref{lem:do_not_reduce_feasible_set}} \label{app:proof_lem:do_not_reduce_feasible_set}

We prove \cref{lem:do_not_reduce_feasible_set}, restated below for convenience.
\growfeasibleset*
\begin{proof}
  As $\gP$ does not consider constraints cross nonadjacent layers, $\gC_1$ is in the form of 
  $\gC(\vx, \vv^{(1)}) \cup \gC(\vv^{(1)}, \vv^{(2)}) \cup \cdots \cup \gC(\vv^{(i-1)}, \vv^{(i)})$ and $\gC_2 = \gC_1 \cup \gC(\vv^{(i)}, \vv^{(i+1)}) \cup \cdots \cup \gC(\vv^{(L-1)}, \vv^{(L)})$.
  Let $\gC_3 := \gC(\vv^{(i)}, \vv^{(i+1)}) \cup \cdots \cup \gC(\vv^{(L-1)}, \vv^{(L)})$.
  Note that the projection $\pi_{\vv^{(i)}}(\gC_1)$ is considered by $\gP_1$ as the input set of the subnetwork $f_{i+1}\circ \cdots \circ f_L$ to instantiate further relaxations for deeper layers. 
  Since $\gP_1$ is a sound verifier, the constraints $\gC_3$ must allow the input set, i.e.,
  \[
      \pi_{\vv^{(i)}}(\gC_3) \supseteq   \pi_{\vv^{(i)}}(\gC_1). 
  \]
  Now
  $\pi_{\vv^{(i)}}(\gC_2)$ is obtained by applying the Fourier-Motzkin algorithm to eliminate all the variables in $\gC_2 = \gC_1\cap \gC_3$ except $\vv^{(i)}$.
  W.l.o.g, assume we eliminate in the following order $\vx,\vv^1,\ldots, \vv^{(i-1)}$, $\vv^{(i+1)},\ldots, \vv^{(L)}$.
  The constraints in $\gC_3$ remains unchanged as we eliminate $\vx,\vv^1,\ldots, \vv^{(i-1)}$, since they are not included in $\gC_3$.
  Therefore, 
  \[
      \pi_{\vv^{(i)}}(\gC_2) = \pi_{\vv^{(i)}}(\gC_1)\cap \pi_{\vv^{(i)}}(\gC_3).
  \]
  Hence, $\pi_{\vv^{(i)}}(\gC_2) =\pi_{\vv^{(i)}}(\gC_1)$.
\end{proof}

\subsection{Proof of \cref{lem:bound_cannot_improve}} \label{app:proof_lem:bound_cannot_improve}

We prove \cref{lem:bound_cannot_improve}, restated below for convenience.
\boundcannotimprove*
\begin{proof}
 By the notation in \cref{lem:do_not_reduce_feasible_set},
 \begin{align*}
  \ell(f, \gP_1, X) &= \min_{\mu \in \pi_{\vv^{(2)}}(\gC_2(\vx, \vv^{(1)}, \vv^{(2)}))} \mu \\
  &\le \min_{\nu \in \pi_{\vv^{(1)}}(\gC_2(\vx, \vv^{(1)}, \vv^{(2)}))} f_2(\nu) \\
  &= \min_{\nu \in \pi_{\vv^{(1)}}(\gC_2(\vx, \vv^{(1)}))} f_2(\nu),
\end{align*}
where the last equality follows from \cref{lem:do_not_reduce_feasible_set}.
  Since $\gC_1(\vx, \vv^{(1)})$ is a convex polytope containing the feasible set of $\vv^{(1)}$, we have $\pi_{\vv^{(1)}}(\gC_1(\vx, \vv^{(1)})) \supseteq \conv(f_1(X))$. Therefore,
  \begin{align*}
      \ell(f, \gP_1, X) &\le \min_{\nu \in \pi_{\vv^{(1)}}(\gC_2(\vx, \vv^{(1)}))} f_2(\nu) \\
      &\le \min_{\nu \in \conv(f_1(X))} f_2(\nu) \\
      &= \min (f_2 (\conv(f_1(X)))).  
  \end{align*}

  The proof for the upper bound is similar.
\end{proof}

\subsection{Proof of \cref{thm:infinite_relaxation_error}} \label{app:proof_thm:infinite_relaxation_error}

Now we prove \cref{thm:infinite_relaxation_error}, restated below for convenience.

\infiniterelaxationerror*

\begin{proof}
  The proof is done by explicit construction of ReLU networks that satisfies the required property. 

  When $d=1$, assume $X=[a,b]\subseteq \sR$, where $a \ne b$. Let $W_0(x) = 2\frac{x-a}{b-a} -1$, $W_1(x) = (x+1, x)$, and $f^\prime(\vx) = 2T|\vx_1 - 1| + 2T|\vx_2 - 0.5| = 2T\rho(\vx_1-1)+2T\rho(1-\vx_1)+2T\rho(\vx_2-0.5)+2T\rho(0.5-\vx_2)$, for $\vx\in \sR^2$. 
  We construct the network as $f = f^\prime \circ \rho \circ W_1 \circ W_0$. Since $\rho \circ W_1 \circ W_0(a) = (0,0)$ and $\rho \circ W_1 \circ W_0(b) = (2,1)$, $\conv(\rho \circ W_1 \circ W_0([a,b])) \supseteq \{(2t, t) \mid t\in [0,1]\}$. 
  Thus, $\min f^\prime (\conv(\rho \circ W_1 \circ W_0([a,b]))) = 0$. Therefore, by \cref{lem:bound_cannot_improve}, $\ell \le \min f^\prime (\conv(\rho \circ W_1 \circ W_0([a,b]))) = 0$. However, the ground-truth minimum is $T$. 
  Likewise, we can construct a ReLU network such that applying any convex relaxation cannot provide the precise upper bound, by simply negating $f^\prime$ to be $f^\prime(\vx) = -2T|\vx_1 -1|-2T|\vx_2-0.5|$.

  Now assume $d \ge 2$. 
  We assume $X$ does not degenerate, i.e., $X$ cannot be embedded in a lower-dimensional space; 
  otherwise, we can simply project $X$ to a lower-dimensional space with a single affine layer and 
  set $d$ to a smaller value. 
  Now, we define the first affine layer to be the projection layer $\pi(\vx) = \vx_1$, 
  which simply projects a point to its first dimension. 
  For every non-degenerate $X$, $\pi(X)$ is a nonempty interval in $\sR$.
  We then construct a ReLU network as $f = f'\circ \rho \circ W_1\circ W_0 \circ \pi$. By the analysis above,
  $\ell \le \min f^\prime (\conv(\rho \circ W_1 \circ W_0([a,b]))) -T$.

\end{proof}

\section{Deferred Proofs in \cref{sec:cross_layer}}
\subsection{Proof of \cref{lem:pumping_lemma}} \label{app:proof_lem:pumping_lemma}

Now we prove \cref{lem:pumping_lemma}, restated below for convenience.
\pumpinglemma*
\begin{proof}
  Intuitively, the proof is done by blocking direct information passing from $f_1$ to $f_2$ through adding dummy layers. 
  Let $r = \max(1, \lfloor \alpha L\rfloor)$ and take 
\begin{equation}\label{eq:L}
  L = \lceil \max(\frac{1}{\alpha}, \frac{L_1+L_2+1}{1-\alpha}) \rceil
\end{equation}
We construct the network $f$ by pumping $f_2 \circ f_1$ through adding identity layers between $f_2$ and $f_1$,
  thus the name pumping lemma. Concretely, let $f= f_{2}\circ\hspace{-0.2cm} \underbrace{I_d \circ \cdots \circ I_d}_{(L-L_1 - L_2) \text{ times}} \hspace{-0.2cm} \circ f_1 $, where $I_{d'}$ is the identify function in $\sR^{d'}$. Take . Thus, $L - L_1 - L_2 \ge k+1$.
Denote the input variable by $\vv^{(0)}$ and the
 variables on the $i$-th layer of $f$ by $\vv^{(i)}$. By definition, $\gP_k$ computes all constrains of the form $\gC(\vv^{(i)},\ldots,\vv^{(i+k)})$ for $i=0,\ldots,L-k$. 
By the identity layer construction, 
 we know $\vv^{(L_1)}=\vv^{(L_1+1)}=\ldots=\vv^{(L-L_2)}$. 
 By \eqref{eq:L}, $L - L_1 - L_2 \ge k+1$, 
 which means the constraints induced by $\gP_r$ are can be reduced to 
constraints of the form $\gC(\vv^{(i)},\ldots,\vv^{(\min(i+r, L_1))})$, for $i=0,\ldots,L_1$, 
 and $\gC(\vv^{(\max(j-r, L-L_2))},\ldots,\vv^{(j)})$, for $j=L-L_2,\ldots,L$. 
For brevity, we slightly abuse notation and denote by $\gC(\gP_k)$ the union of all constraints induced by $\gP_k$,
denote by $\gC_1$
the union of constraint sets of the form $\gC(\vv^{(i)},\ldots,\vv^{(\min(i+k, L_1))})$ for $i=0,\ldots,L_1$,
and denote by $\gC_2$
the union of constraint sets of the form $\gC(\vv^{(\max(j-k, L-L_2))},\ldots,\vv^{(j)})$ for $j=L-L_2,\ldots,L$.
 Thus, $\pi_{\vv^{(L-L_2)}}(\gC(\gP_k)) = \pi_{\vv^{(L_1)}}(\gC(\gP_k)) = \pi_{\vv^{(L_1)}}(\gC_1)$. 
 Since $\conv(f_1(X)) \subseteq \pi_{\vv^{(L_1)}}(\gC_1)$, 
  \begin{align*}
    \ell(f, \gP_{k}, X) &\le \min f_2 (\pi_{\vv^{(L-L_2)}}(\gC(\gP_k))) \\
    &= \min f_2(\pi_{\vv^{(L_1)}}(\gC_1)) \\
    &\le \min f_2(\conv(f_1(X))),
  \end{align*}
  and 
  \begin{align*}
    u(f, \gP_{k}, X) &\ge \max f_2 (\pi_{\vv^{(L-L_2)}}(\gC(\gP_k))) \\
    &= \max f_2(\pi_{\vv^{(L_1)}}(\gC_1)) \\
    &\ge \max f_2(\conv(f_1(X))).
  \end{align*}
\end{proof}

\subsection{Proof of \cref{thm:cross_layer}} \label{app:proof_thm:cross_layer}

Now we prove \cref{thm:cross_layer}, restated below for convenience.
\crosslayerincompleteness*
\begin{proof}
  We reuse the construction in the proof of \cref{thm:infinite_relaxation_error}, 
augmented by \cref{lem:pumping_lemma}. 
In the proof of \cref{thm:infinite_relaxation_error}, 
we constructed a feedforward network $\hat{f}:=f^\prime \circ \rho \circ W_3 \circ W_2 \circ W_1 \circ \pi$. 
Let $f_1 := \rho \circ W_3 \circ W_2 \circ W_1 \circ \pi$ and $f_2 := f^\prime$. 
By \cref{lem:pumping_lemma}, for some $L \in \sN$, 
there exists an $L$-layer network $f$ such that $f = f_2 \circ f_1$ everywhere on $X$ and 
$\ell(f, \gP_{\max(1, \lfloor \alpha L\rfloor)}, X) \le \min f_2(\conv(f_1(X))) \le \min \{\hat{f}(x): x\in X\} - T = \min \{f(x): x\in X\} - T$ and 
$u(f, \gP_{\max(1, \lfloor \alpha L\rfloor)}, X) \ge \max f_2(\conv(f_1(X))) \ge \max \{\hat{f}(x): x\in X\} + T = \max \{f(x): x\in X\} + T$.
\end{proof}

\section{Deferred Proofs in \cref{sec:augmented_multi_neuron}}\label{app:proofs_5}

\subsection{Proof of \cref{thm:precise_bounds} and \cref{prop:multi_neuron_expressiveness}}\label{app:proof_thm:precise_bounds}

We present a technical lemma before proving \cref{thm:precise_bounds}.
\begin{restatable}{lem}{extremevalueconvexhull} \label{lem:extreme_value_convex_hull}
  Let $H$ be a compact set in $\sR^d$. Then, for every $i \in [d]$, $\min_{\vx \in H} \vx_i = \min_{\vv \in \conv H} \vv_i$ and $\max_{\vx \in H} \vx_i = \max_{\vv \in \conv H} \vv_i$.
\end{restatable}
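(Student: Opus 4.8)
\textbf{Proof proposal for \cref{lem:extreme_value_convex_hull}.}

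The plan is to treat the minimum and maximum cases symmetrically, so I would first prove the statement for the minimum and then obtain the maximum either by applying the minimum case to the set $-H$ (noting $\conv(-H) = -\conv H$) or by mirroring the argument with the inequalities reversed. Fix $i \in [d]$. Since $H$ is compact and the coordinate projection $\vx \mapsto \vx_i$ is continuous, $m := \min_{\vx \in H} \vx_i$ exists and is attained at some point of $H$. Because $H \subseteq \conv H$, this immediately gives the easy inequality $\inf_{\vv \in \conv H} \vv_i \le m$.

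For the reverse inequality, take an arbitrary $\vv \in \conv H$. By the definition of the convex hull, $\vv$ is a finite convex combination $\vv = \sum_{j=1}^{N} \lambda_j \vx^{(j)}$ with $\vx^{(j)} \in H$, $\lambda_j \ge 0$, and $\sum_{j=1}^{N} \lambda_j = 1$ (Carathéodory's theorem would let us take $N \le d+1$, but this is not needed). Then
\[
\vv_i = \sum_{j=1}^{N} \lambda_j\, \vx^{(j)}_i \ge \sum_{j=1}^{N} \lambda_j\, m = m,
\]
since each $\vx^{(j)}_i \ge m$ by definition of $m$. As $\vv \in \conv H$ was arbitrary, $\inf_{\vv \in \conv H} \vv_i \ge m$. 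Combining the two inequalities yields $\inf_{\vv \in \conv H} \vv_i = m$, and this value is attained at any point of $H$ (hence of $\conv H$) witnessing $m$, so the infimum is a genuine minimum: $\min_{\vv \in \conv H} \vv_i = \min_{\vx \in H} \vx_i$.

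Finally, applying the already-proved minimum identity to the compact set $-H$ gives $\min_{\vx \in -H} \vx_i = \min_{\vv \in \conv(-H)} \vv_i$, i.e.\ $-\max_{\vx \in H}\vx_i = -\max_{\vv \in \conv H} \vv_i$, which is the maximum identity. I do not expect a genuine obstacle here; the only point requiring a line of care is confirming that the extremum over $\conv H$ is actually attained (rather than merely an infimum), which the sandwich of inequalities above settles since the common value $m$ already lies in the coordinate-image of $H \subseteq \conv H$.
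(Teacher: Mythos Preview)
Your proof is correct and shares the same core idea as the paper's argument: linearity of the coordinate projection together with the convex-combination representation of points in $\conv H$. The execution differs slightly, however, and in a way that works in your favor. The paper first picks a minimizer $\vv^* \in \conv H$ (invoking closedness of the convex hull of a compact set) and then writes $\vv^* = t\vx^* + (1-t)\vy^*$ as a combination of \emph{two} points of $H$, which is not justified in $\sR^d$ for $d>1$. Your route instead bounds the $i$-th coordinate of an \emph{arbitrary} $\vv \in \conv H$ from below using a general finite convex combination, which both avoids the need to establish that $\conv H$ is compact and sidesteps the two-point issue; you then recover attainment of the minimum directly from $H \subseteq \conv H$. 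So your argument is essentially the same in spirit but a bit more elementary and more carefully stated.
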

\begin{proof}
  We only show the equality for minimum values. The proof for maximum values is likewise.

  Fix an arbitrary $i \in [d]$. Since $H \subseteq \conv H$, we have 
  \begin{equation}\label{eq:leftright}
    \min_{\vx \in H} \vx_i \ge \min_{\vv \in \conv H} \vv_i.
  \end{equation}
  Since the convex hull of a compact set is closed, $\exists \vv^* \in \conv H$ such that $\min_{\vv \in \conv H} \vv_i = \vv^*_i$. Furthermore, $\exists \vx^*, \vy^* \in H$ and $t \in [0,1]$, such that $\vv^* = t \vx^* + (1-t)\vy^*$. Without loss of generality, assume $\vx^*_i \le \vy^*_i$. 
  But
  $\vx_i^* \le t \vx^*_i + (1-t)\vy^*_i = \vv_i^* = \min_{\vv \in \conv H} \vv_i$. 
  Therefore $\min_{\vx \in H} \vx_i \leq \vx_i^* \leq \min_{\vv \in \conv H} \vv_i$.
  Combining with \eqref{eq:leftright} gives $\min_{\vx \in H} \vx_i = \min_{\vv \in \conv H} \vv_i$.
\end{proof}

Now we prove \cref{thm:precise_bounds}, restated below for convenience.
\precisebounds*
\begin{proof}
  We construct the network $g$ based on $f$ as follows. First replicate the structure and weights of $f$ verbatim.
  Then add $d$ extra neurons in every hidden layer of $g$ to make copies of the input neurons.
  This can be achieved based on the equality $\rho(t-u)+u=t$, for $ t\geq u$ and $t,u\in \sR$.
  See \cref{fig:transform} for illustration.
  By construction, $g$ represents the same function as $f$ on $X$. 
  
  Now we prove $\gP_1$ returns precise bounds for $g$ on $X$. 
  Assume $g$ has $L$ layers. Denote the variables of the $i$-th hidden layer by $\vv^{(j)}, j=1,\ldots,L-1$, and the output layer by $\vv^{(L)}$.
  By definition of $\gP_1$, the system of constraints
  generated by $\gP_1$ includes all affine constraints in the form of $\gC(\vv^{(L-1)}, \vv^{(L)})$, given those passed from the $(L-1)$-th layer. Since $\vv^{(L-1)}$ contains $\vx$ as a part, $\gP_1$ computes the convex hull of $g(\vx)$. 
  Furthermore, by \cref{lem:extreme_value_convex_hull}, the bounds of the convex hull of the compact set $g(X)$
  characterizes exact upper and lower bounds of $g(X)$. 
  Therefore, $\gP_1$ returns precise bounds of $g$ on $X$.

\begin{figure}
	\centering
	\resizebox{.4\linewidth}{!}{
	\begin{tikzpicture}
    \tikzset{comp_node/.style={circle,draw=blue,fill=white,inner sep=0pt,minimum size=13pt}}
    \tikzset{new_node/.style={circle,draw=red,fill=white,inner sep=0pt,minimum size=13pt}}
    \def\xsep{2}
    \def\ysep{1}
    \begin{scope}
        \node[comp_node] (x1) at (0,0) {};
        \node[comp_node] (x2) at (0,-\ysep) {};
        \node[comp_node] (a) at ($(x1) + (\xsep,0)$) {};
        \node[comp_node] (b) at  ($(a) + (0,-\ysep)$) {};
        \draw[-Stealth,color=blue] (x1) -- (a) node[midway,above] {};
        \draw[-Stealth,color=blue] (x1) -- (b) node[midway,above] {};
        \draw[-Stealth,color=blue] (x2) -- (a) node[midway,above] {};
        \draw[-Stealth,color=blue] (x2) -- (b) node[midway,below] {};

        \node[comp_node] (c) at ($(a) + (\xsep,0)$) {};
        \node[comp_node] (d) at ($(b) + (\xsep,0)$) {};
        \node[comp_node] (f) at ($(c) + (\xsep,-.5\ysep)$) {};
        \draw[-Stealth,color=blue] (a) -- (c) node[midway,above] {};
        \draw[-Stealth,color=blue] (b) -- (d) node[midway,below] {};
        \draw[-Stealth,color=blue] (a) -- (d) node[midway,above] {};
        \draw[-Stealth,color=blue] (b) -- (c) node[midway,above] {};

        \draw[-Stealth,color=blue] (c) -- (f) node[midway,above] {};
        \draw[-Stealth,color=blue] (d) -- (f) node[midway,below] {};
    \end{scope}
\end{tikzpicture}
	}

  \hdashrule{.7\linewidth}{0.4pt}{8pt}
  \vspace{0.2cm}

	\resizebox{.4\linewidth}{!}{
	\begin{tikzpicture}
    \tikzset{comp_node/.style={circle,draw=blue,fill=white,inner sep=0pt,minimum size=13pt}}
    \tikzset{new_node/.style={circle,draw=red!50,fill=white,inner sep=0pt,minimum size=13pt}}
    \def\xsep{2}
    \def\ysep{1}
    \begin{scope}
        \node[comp_node] (x1) at (0,0) {};
        \node[comp_node] (x2) at (0,-\ysep) {};
        \node[comp_node] (a) at ($(x1) + (\xsep,0)$) {};
        \draw[-Stealth,color=blue] (x1) -- (a) node[midway,above] {};
        \draw[-Stealth,color=blue] (x1) -- (b) node[midway,above] {};
        \draw[-Stealth,color=blue] (x2) -- (a) node[midway,above] {};
        \node[comp_node] (b) at  ($(a) + (0,-\ysep)$) {};
        \draw[-Stealth,color=blue] (x2) -- (b) node[midway,below] {};

        \node[new_node] (b1) at  ($(a) + (0,-2*\ysep)$) {};
        \node[new_node] (b2) at  ($(a) + (0,-3*\ysep)$) {};
        \draw[-Stealth,color=red!50] (x1) -- (b1) node[midway,below] {$1$};
        \draw[-Stealth,color=red!50] (x2) -- (b2) node[midway,below] {$1$};

        \node[comp_node] (c) at ($(a) + (\xsep,0)$) {};
        \node[comp_node] (d) at ($(b) + (\xsep,0)$) {};
        \draw[-Stealth,color=blue] (a) -- (c) node[midway,above] {};
        \draw[-Stealth,color=blue] (b) -- (d) node[midway,below] {};
        \draw[-Stealth,color=blue] (a) -- (d) node[midway,above] {};
        \draw[-Stealth,color=blue] (b) -- (c) node[midway,above] {};

        \node[new_node] (d1) at ($(b) + (\xsep,-1*\ysep)$) {};
        \node[new_node] (d2) at ($(b) + (\xsep,-2*\ysep)$) {};
        \draw[-Stealth,color=red!50] (b1) -- (d1)node[midway,below] {$1$};
        \draw[-Stealth,color=red!50] (b2) -- (d2)node[midway,below] {$1$};
        \draw[-Stealth,color=red!50] (d2) -- (f) node[midway,below] {$0$};
        \draw[-Stealth,color=red!50] (d1) -- (f) node[midway,below] {$0$};

        \node[comp_node] (f) at ($(c) + (\xsep,-.5\ysep)$) {};
        \draw[-Stealth,color=blue] (c) -- (f) node[midway,above] {};
        \draw[-Stealth,color=blue] (d) -- (f) node[midway,below] {};
    \end{scope}
\end{tikzpicture}
	}
	\caption{Top: the network $f$. Bottom: the network $g$. Labels on the edges are the associated weights.}
	\vspace{-0.3cm}
	\label{fig:transform}
\end{figure}
  \end{proof}
 
  We proceed to prove \cref{prop:multi_neuron_expressiveness}, restated below for convenience.
  \multineuronexpressiveness*

  \begin{proof}
   For a continuous piecewise linear function $f:\sR^d \rightarrow \sR$, by Theorem 2.1 of \cite{aroraUNDERSTANDINGDEEPNEURAL2018b}, there exists a ReLU network $g':\sR^d \rightarrow \sR$ satisfying 
   \begin{equation}\label{eq:1}
    f(\vx) = g'(\vx), \quad \vx \in \sR^d.
   \end{equation}
  
   By \cref{thm:precise_bounds}, there exists another ReLU network $g:\sR^d\rightarrow \sR$ satisfying 
   \begin{equation}\label{eq:2}
    g(\vx) = g'(\vx), \quad \vx \in X,
   \end{equation}
   and 
   \begin{align*}
   &\ell(g, \gP_1, X) = \min g'(X) \\
    &u(g, \gP_1, X) = \max g'(X).
   \end{align*}
Combining \eqref{eq:1} and \eqref{eq:2}, we get
\[
  g(\vx) = f(\vx), \quad \vx \in X,
 \]
 and 
 \begin{align*}
 &\ell(g, \gP_1, X) = \min f(X) \\
  &u(g, \gP_1, X) = \max f(X).
 \end{align*}
  \end{proof}

\subsection{Proof of \cref{prop:convex_polytope_prop} and \cref{prop:partition}}\label{proof_prop:single_polytope_preciseness}

We start with a technical lemma.%

\begin{restatable}{lem}{singlepolytopepreciseness} \label{prop:single_polytope_preciseness}
  Let $L\in \sN^+$. Consider a network $f = f_L\circ \cdots \circ f_1$, where $f_j$ is either an affine transformation or the ReLU function for $j\in [L]$, 
   and an input convex polytope $X$.
  Denote by $f^{(j)}:=f_j  \circ \cdots \circ f_1$, for $j \in [L]$, the subnetworks of $f$.
  Assume  $f^{(j)}(X)$ is a convex polytope,  $\forall j \in [L]$. Then, 
  $
  \ell(f, \gP_1, X) = \min f(X)
  $
  and
  $
  u(f, \gP_1, X) = \max f(X).
  $
\end{restatable}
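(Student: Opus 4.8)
The plan is to track, layer by layer, the feasible set that $\gP_1$ maintains and to show that, under the stated convexity hypothesis, it coincides \emph{exactly} with the true reachable set of that layer; once the feasible set on the output layer equals $f(X)$, the claimed equality of bounds is immediate. Concretely, I would write $\vv^{(0)} := \vx$, let $\vv^{(j)}$ be the variable of the $j$-th layer, let $\gC_0(\vx)$ be the constraint set defining $X$, and let $\gC_j(\vv^{(j-1)},\vv^{(j)})$ be the constraint set that $\gP_1$ produces at layer $f_j$. Set $S_j := \pi_{\vv^{(j)}}(\gC_0 \land \cdots \land \gC_j)$, so that $S_0 = X = f^{(0)}(X)$ with $f^{(0)} := \id$, and prove by induction on $j$ that $S_j = f^{(j)}(X)$.

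For the inductive step, assume $S_{j-1} = f^{(j-1)}(X)$; this is a convex polytope by hypothesis, hence a legitimate input polytope for $\gP_1$ at layer $j$. If $f_j$ is affine, $f_j(\vv) = \mA\vv + \vb$, then $\gP_1$ keeps the exact equality $\vv^{(j)} = \mA\vv^{(j-1)} + \vb$ (as for all affine layers in \cref{sec:define_single_multiple}), so $S_j = f_j(S_{j-1}) = f^{(j)}(X)$. If $f_j = \rho$, then $\gC_j$ has feasible set $G_j := \conv\{(\vw, \rho(\vw)) : \vw \in S_{j-1}\}$ by definition of $\gP_1$. I would first argue $S_j = \pi_{\vv^{(j)}}(G_j)$: the constraints $\gC_0 \land \cdots \land \gC_{j-1}$ project onto $\vv^{(j-1)}$ exactly to $S_{j-1}$; the constraint $\gC_j$ involves only $(\vv^{(j-1)},\vv^{(j)})$, so eliminating the earlier variables leaves it untouched (this is the reasoning of \cref{lem:do_not_reduce_feasible_set}); and $\pi_{\vv^{(j-1)}}(G_j) = \conv(S_{j-1}) = S_{j-1}$, since projection commutes with $\conv$ and $S_{j-1}$ is convex, so intersecting $G_j$ with $\{\vv^{(j-1)} \in S_{j-1}\}$ does not change it. Then, using once more that projection commutes with convex hull,
\[
S_j = \pi_{\vv^{(j)}}(G_j) = \conv\{\rho(\vw) : \vw \in S_{j-1}\} = \conv(\rho(f^{(j-1)}(X))) = \conv(f^{(j)}(X)) = f^{(j)}(X),
\]
where the last step is exactly where the hypothesis ``$f^{(j)}(X)$ is a convex polytope'' is used.

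This finishes the induction, giving $S_L = f(X)$. Since $\gP_1$ solves the final constraint system and returns, coordinatewise, the minimum and maximum of the output variable over $S_L$, and $f(X)$ is compact ($X$ compact, $f$ continuous), I obtain $\ell(f,\gP_1,X) = \min f(X)$ and $u(f,\gP_1,X) = \max f(X)$.

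The part I expect to be most delicate is the ReLU step: one must check carefully that accumulating all the single-layer constraints neither shrinks the feasible set on layer $j-1$ below $S_{j-1}$ (\cref{lem:do_not_reduce_feasible_set}) nor adds spurious points when intersecting with $G_j$, so that $S_j$ really is $\pi_{\vv^{(j)}}(G_j)$ and nothing more. After that bookkeeping, the convexity hypothesis does all the work, collapsing $\conv(f^{(j)}(X))$ back to $f^{(j)}(X)$ --- which is precisely the place where a general, non-polytope-preserving network would incur relaxation error, consistent with the discussion in \cref{sec:no_cross_layer}.
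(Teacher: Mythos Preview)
Your proposal is correct and follows the same approach as the paper's own proof: induct on the layer index, use that $\gP_1$ returns the convex hull of each layer's graph, and invoke the hypothesis so that $\conv(f^{(j)}(X)) = f^{(j)}(X)$ at every step. The paper's proof is essentially a one-paragraph sketch of exactly this argument; your version is considerably more careful about the bookkeeping (in particular the point that intersecting with earlier constraints neither shrinks nor enlarges the projection, via the reasoning of \cref{lem:do_not_reduce_feasible_set}), but there is no substantive difference in strategy.
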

\begin{proof}
  Denote the variable of the first hidden by $\vv^{(1)}$. By definition, $\gP_1$ computes the convex hull of the
   function graph $(\vx, \vv^{(1)} = f_1(\vx))$, therefore the convex hull of the feasible set of $\vv^{(1)}$.
   Since the convex hull of a convex set is the set itself,  $\gP_1$ can precisely computes the feasible set of $\vv^{(1)}$.
   Simply progagate by the layers and take into account the assumption that $f^{(j)}(X)$ is a convex polytope, for all $j \in [L]$,
   we get that $\gP_1$ exactly bounds the network output on $X$.
\end{proof}

We proceed to prove \cref{prop:convex_polytope_prop}, restated below for convenience.

\convexpolytopeprop*
\begin{proof}
  By \cref{prop:single_polytope_preciseness}, $\gP_1$ returns precise bounds for $f$ on $H_k$ for all $k\in [\nu]$. Since the output set $f(X)$ is the union of $f(H_i)$ for all $k\in [\nu]$, the theorem follows. 
\end{proof}

We now prove \cref{prop:partition}, restated below for 
convenience. 

\partition*
\begin{proof}
 We first prove 
 $\text{\#Partition}(\text{BaB}(\gM), f, X) \le \gA(f, X)$. Assume a network $f$ has $\nu:=\gA(f, X)$ distinct activation patterns on $X$. Notice that $\gM$ always returns a constraint set that is at least as tight as \deeppoly, thus a same partition process as BaB(\deeppoly) allows $\text{BaB}(\gM)$ to compute exact bounds. Recall that BaB(\deeppoly) has partition complexity equal to $\nu$ on $X$, therefore $\text{BaB}(\gM)$ also has partition complexity at most $\nu$ on $X$.

Now we prove $\gA(f, X) \leq \text{\#Partition}(\text{BaB}(\gS),f, X)$. It suffices to show the inequality for the tightest single-neuron relaxation, i.e., the triangle relaxation, denoted by BaB(\triangle). Given a general subproblem to bound, the only guarantee for \triangle to return exact bounds is that there is no unstable neuron in the subproblem. Therefore, if BaB(\triangle) has partition complexity equal to $K$ on $X$, then there are at most $K$ subproblems with no unstable neuron. Thus, $\gA(f, X) \leq K$.
\end{proof}

\section{An Example of the Benefit of Polytope Partition} \label{app:example_partition}

For the network encoding $\max(x_1,\ldots,x_d)$ in the case study of \cref{sec:equivalent_transformation}, first note that it has $2^{d-1}$ distinct activation patterns on $[0,1]^d$.
 We show that BaB requires $2^{d-1}$
 branching to return precise bounds. Let $y_i=\max(x_1,\ldots,x_i)$, for $i \in [d-1]$, where $y_1 = x_1$.
 The $i$-th unstable neuron can then be rewritten as $\rho(y_i - x_{i+1})$, \eg for node $c$ in \cref{fig:max_case_study} which is the first unstable neuron, it can be rewritten as $\rho(y_1 - x_2)$. 
 After a branching on it, this node plus $x_{i+1}$ becomes either $x_{i+1}$ when $x_{i+1} \ge y_i$, 
 or $y_i$ when $x_{i+1}<y_i$. 
 Therefore, this branching makes two subproblems, 
 which are essentially the $(d-1)$-dimension ``$\max$'' function. 
 This directly implies that neither of the two subproblems can 
 be precisely bounded by any single-neuron relaxation, thus the branching will not stop. 
 Repeating this, BaB enumerates all $2^{d-1}$ branches, confirming the lower bound established in \cref{prop:partition}. In contrast, $\gP_1$ has partition complexity $1$ as shown in \cref{sec:equivalent_transformation}, leading to an exponential reduction.

 Regarding the runtime, note that the number of constraints introduced by $\gP_1$ grows linearly with $d$, while the number of branching grows exponentially with $d$ for BaB with DeepPoly. Thus, for this example, the runtime of $\gP_1$ grows polynomially with $d$, while that of BaB with DeepPoly grows exponentially with $d$.

\section{Relative bounding error} \label{app:relative_error}
\cref{thm:infinite_relaxation_error} and \cref{thm:cross_layer} state that the absolute bounding error 
by layerwise and cross-layer relaxations can be arbitrarily large. 
In this section, we look at the relative bounding error, namely the ratio between 
the length of the bounding interval and that of the exact interval.
 We shall show that the relative bounding error can be arbitrarily large as well. 
First, for $\gP_1$, we shall prove the following statement.

 \begin{restatable}{thm}{s}\label{thm:relative_error}
 Let $d\in \mathbb{N}$ and let $X\in \mathbb{R}^d$ be  a convex polytope.
 For all $T>0$, there  
 exist 
 a ReLU network $f:\mathbb{R}^d\rightarrow \mathbb{R}$, such that 
 \[
 \frac{u(f, \mathcal{P}_1,X)  - \ell(f, \mathcal{P}_1,X) }{\max(f(X)) - \min(f(X))} >T 
 \]
\end{restatable}

\begin{proof}
  Without loss of generality, we prove the case when $T>1$; otherwise, we can simply take the threshold as $\max(1, T)$ in the proof. Further, let $X=[-1,1]$; otherwise, we can first project $X$ to one of its non-empty dimensions and scale the projected set by a single affine layer, without changing the output range of any subsequent network and the bounds computed by $\gP_1$. 
  
  Let the ReLU network $f_1 =  \rho\circ W_1$, 
where 
$W_1$ is the affine transformation
$  W_1(\vx) := \begin{pmatrix}
    1\\
   -1 
\end{pmatrix} \vx+ \begin{pmatrix}
    0\\1
\end{pmatrix}, \text{ for } \vx \in \sR^2.$
The function $ f_1$ maps $X$ into the set $\{\vx\in \sR^2: \vx_1\in [0,1], \vx_2 = -\vx_1+1 \}\cup \{\vx\in \sR^2: \vx_1=0,1\leq \vx_2\leq 2\}$,
whose convex hull is $\{\vx_2\leq -2\vx_1+2, \vx_1\geq 0, \vx_2 \geq -\vx_1+1\}$. 
Now consider the function $h(\vx) = \vx_2(\vx_2+\vx_1-1)$, which is constantly zero on the set $f_1(X)$.
We have that 
\[
\min(h(\conv(f_1(X)))) = 0, \hspace{0.5cm} \delta:= \max(h(\conv(f_1(X)))) >0.
\]
Scaling $h$ by $2T/\delta$ gives 
\[
\min(\frac{2T}{\delta}h(\conv(f_1(X)))) = 0, \hspace{0.5cm} \max(\frac{2T}{\delta} h(\conv(f_1(X)))) = 2T.
\]

By the universal approximation \citep{aroraUNDERSTANDINGDEEPNEURAL2018b}, 
there exist a ReLU network $f_2$ satisfying 
\[
\sup_{\conv(f_1(X))} |f_2-\frac{2T}{\delta}h| \leq \frac{1}{2}
\]
Therefore, 
\[
\min(f_2\circ f_1)(X) \geq \min(\frac{2T}{\delta}h\circ f_1)(X) -\frac{1}{2} = -\frac{1}{2},
\]
\[
  \max(f_2\circ f_1)(X) \leq \max(\frac{2T}{\delta}h\circ f_1)(X) +\frac{1}{2} = \frac{1}{2},
\]
and 
\[
\min f_2(\conv(f_1(X))) \leq \min(\frac{2T}{\delta}h(\conv(f_1(X)))) + \frac{1}{2} = \frac{1}{2},
\]
\[
  \max f_2(\conv(f_1(X))) \geq \max(\frac{2T}{\delta}h(\conv(f_1(X)))) - \frac{1}{2} = 2T - \frac{1}{2}.
\]
Taking $f=f_2\circ f_1$,
by \cref{lem:bound_cannot_improve} 
we know that 
\[u(f, \mathcal{P}_1,X)  - \ell(f, \mathcal{P}_1,X) \geq \max f_2(\conv(f_1(X))) - \min f_2(\conv(f_1(X))) \geq 2T-1
\]
and 
\[
  \max(f_2\circ f_1)(X)- \min(f_2\circ f_1)(X) \leq 1.
\]
Hence, 
 \[
 \frac{u(f, \mathcal{P}_1,X)  - \ell(f, \mathcal{P}_1,X) }{\max(f(X)) - \min(f(X))} \geq 2T-1>T.
 \]
\end{proof}

We proceed to show that the relative bounding error established above for $\gP_1$ extends to all cross-layer relaxations. 
Just as in \cref{sec:cross_layer}, we do not consider specific $\gP_r$ for some fixed $r\in \sN$,  but rather 
directly look at
the fully general case $\gP_{\max(1, \floor{\alpha L})}$ where the cross-layer is allowed to depend on the network depth $L$. 
Formally, we shall show

 \begin{restatable}{thm}{s}\label{thm:relative_error_cross}
 Let $d\in \mathbb{N}$ and let $X\in \mathbb{R}^d$ be  a convex polytope.
 For all $T>0$, there  
 exist 
 a ReLU network $f:\mathbb{R}^d\rightarrow \mathbb{R}$ of depth $L$, such that 
 \[
 \frac{u(f, \gP_{\max(1, \floor{\alpha L})},X)  - \ell(f, \gP_{\max(1, \floor{\alpha L})},X) }{\max(f(X)) - \min(f(X))} >T. 
 \]
\end{restatable}

\begin{proof}
  Without loss of generality, we prove the case when $T>1$; otherwise, we can simply take the threshold as $\max(1, T)$ in the proof.

    We reuse the construction in the proof of \cref{thm:relative_error} and augment it by \cref{lem:pumping_lemma}.
    Specifically, in the proof of  \cref{thm:relative_error}, we constructed a ReLU network $f = f_2\circ f_1$
    satisfying 
\[
  \max(f(X))- \min(f(X)) \leq 1.
\]
and 
\[ \max f_2(\conv(f_1(X)))(X) - \min f_2(\conv(f_1(X))) \geq 2T-1
\]
Now by  \cref{lem:pumping_lemma}, for some $L\in \sN$, there exist 
an $L$-layer network $\hat{f}$ such that $\hat{f} = f$ everywhere on $X$ and
\[u(\hat{f}, \gP_{\max(1, \floor{\alpha L})},X) \geq \max f_2(\conv(f_1(X))), \]
\[ \ell(\hat{f}, \gP_{\max(1, \floor{\alpha L})},X) \leq \min f_2(\conv(f_1(X))).
\]
Therefore, 
\begin{align*}
  &\quad u(\hat{f}, \gP_{\max(1, \floor{\alpha L})},X)  - \ell(\hat{f}, \gP_{\max(1, \floor{\alpha L})},X) \\
  &\geq \max f_2(\conv(f_1(X))) - \min f_2(\conv(f_1(X))) \\
  &\geq 2T-1.
\end{align*}
Hence 
\begin{align*}
 &  \quad \frac{u(\hat{f}, \gP_{\max(1, \floor{\alpha L})},X)  - \ell(\hat{f}, \gP_{\max(1, \floor{\alpha L})},X) }{\max(\hat{f}(X)) - \min(\hat{f}(X))} \\
& = \frac{u(\hat{f}, \gP_{\max(1, \floor{\alpha L})},X)  - \ell(\hat{f}, \gP_{\max(1, \floor{\alpha L})},X) }{\max(f(X)) - \min(f(X))} \\
 &>T.
\end{align*}

\end{proof}

 \section{Extension to non-polynomial activation functions} \label{app:non-relu}

In this section, we extend the negative results, namely \cref{thm:infinite_relaxation_error} and \cref{thm:cross_layer},  
established for ReLU neural networks in \cref{sec:no_cross_layer}
and \cref{sec:cross_layer} to networks with general  non-polynomial activation functions. The key insight is that by universal approximation with non-polynomial activation functions, we can always construct a network that approximates the construction for ReLU networks with arbitrary precision.

We start by introducing necessary notations. Let $H$ and $H^\prime$ be two sets in $\sR^d$.
Then, we define the Hausdorff distance (induced by the $\ell_2$ norm) between $H$ and $H^\prime$ as 
\[
D(H,H^\prime) := \max\{\sup_{\vx \in H} \inf_{\vy \in H^\prime} \|\vx - \vy\|_2, \sup_{\vy \in H^\prime} \inf_{\vx \in H} \|\vx - \vy\|_2\}.
\]
We will use two properties of the Hausdorff distance. First, $D(H, H^\prime)$ satisfies the triangle inequality (we omit the proof since it is a standard result), i.e., for any three sets $H_1,H_2,H_3$ in $\sR^d$, we have
\[
D(H_1,H_3) \leq D(H_1,H_2) + D(H_2,H_3).
\]
Second, $H \rightarrow \conv(H)$ is $1$-Lipschitz with respect to the Hausdorff distance, stated as follows.
\begin{restatable}{lem}{convexhulllipschitz} \label{lem:convex_hull_lipschitz}
  For any two sets $H_1,H_2$ in $\sR^d$, we have
  \[
  D(\conv(H_1), \conv(H_2)) \leq D(H_1,H_2).
  \]
\end{restatable}
\begin{proof}
  We prove that $\sup_{\vx \in \conv(H_1)} \inf_{\vy \in \conv(H_2)} \|\vx - \vy\|_2 \leq D(H_1,H_2)$; the other side can be proven by symmetry.

  Fix an arbitrary $\vx \in \conv(H_1)$. By definition of convex hull, there exist $k\in \sN^+$, $\lambda_i\geq 0$ for $i\in [k]$ with $\sum_{i=1}^k \lambda_i = 1$, and $\vx_i \in H_1$ for $i\in [k]$ such that $\vx = \sum_{i=1}^k \lambda_i \vx_i$.
  By definition of Hausdorff distance, for each $i\in [k]$, there exists $\vy_i \in H_2$ such that $\|\vx_i - \vy_i\|_2 \leq D(H_1,H_2)$.
  Let $\vy = \sum_{i=1}^k \lambda_i \vy_i$. Then, by Jensen's inequality and note that $\|\cdot\|_2$ is convex, we have
  \begin{align*}
    \|\vx - \vy\|_2 &= \|\sum_{i=1}^k \lambda_i (\vx_i - \vy_i)\|_2\\
    &\leq \sum_{i=1}^k \lambda_i \|\vx_i - \vy_i\|_2\\
    &\leq D(H_1,H_2).
  \end{align*}
  This implies that $\inf_{\vy \in \conv(H_2)} \|\vx - \vy\|_2 \leq D(H_1,H_2)$. Since $\vx$ is arbitrary, we finalize the proof.
\end{proof}

Now we are ready to present the extended version of \cref{thm:infinite_relaxation_error} for non-polynomial activation functions. We will show that for any non-polynomial activation $\sigma$, there exists a sub-network $f_1^\sigma$ and an input polytope $X$ such that $\conv(f_1^\sigma(X))$ is a strict superset of $f_1^\sigma(X)$. Further, for the function $f_2(x; c) := (x-c)^2$ where the point $c \in \conv(f_1^\sigma(X)) \setminus f_1^\sigma(X)$, there exists a network $f_2^\sigma$ approximating $f_2$ on $\conv(f_1^\sigma(X))$ with arbitrary precision. Combining these two results, we can construct a network $f^\sigma = f_2^\sigma \circ f_1^\sigma$ such that the bounding error by any layerwise relaxation is arbitrarily large.

\begin{restatable}{prop}{} \label{prop:non-relu-loose-convex-hull}
  Let $d\in \mathbb{N}$, $\sigma:\sR \rightarrow \sR$ be a non-polynomial activation function and $X\in \mathbb{R}^d$ be  a convex polytope. Then, there exists a network $f^\sigma$, such that the $\conv(f^\sigma(X))$ is 
 a strict superset of $f^\sigma(X)$.
\end{restatable}

\begin{proof}
  Let $f_1$ be some function where $\conv(f_1(X)) \setminus f_1(X)$ is non-empty, e.g., the function constructed in the proof of \cref{thm:infinite_relaxation_error}. By universal approximation, there exists a network $f_1^\sigma$ such that
  \[
    \sup_{X} \|f_1^\sigma - f_1\|_2 \leq \epsilon,
  \]
  for some $\epsilon >0$ to be specified later. Let $H := f_1(X)$ and $H^\prime := f_1^\sigma(X)$. This means
  \[
    D(H,H^\prime) \leq \epsilon.
  \]
  Let $\Delta := D(\conv(H), H)$. Since $\conv(H) \setminus H$ is non-empty, we have $\Delta >0$. By triangle inequality and \cref{lem:convex_hull_lipschitz}, we have
  \begin{align*}
    D(\conv(H), H) &\le D(\conv(H), \conv(H^\prime)) + D(\conv(H^\prime), H^\prime) + D(H^\prime, H) \\
    &\leq 2D(H,H^\prime) + D(\conv(H^\prime), H^\prime) \\
    &\leq 2\epsilon + D(\conv(H^\prime), H^\prime).
  \end{align*}  
  Thus, taking $\epsilon = \Delta / 4$, we have
  \begin{align*}
    D(\conv(H^\prime), H^\prime) &\geq D(\conv(H), H) - 2\epsilon \\
    &= \frac{\Delta}{2} >0.
  \end{align*}
  This implies that $\conv(H^\prime) \setminus H^\prime$ is non-empty, finalizing the proof.
\end{proof}

\begin{restatable}{thm}{s}\label{thm:non-relu-layerwise}
 Let $d\in \mathbb{N}$, $\sigma:\sR \rightarrow \sR$ be a non-polynomial activation function and $X\in \mathbb{R}^d$ be  a convex polytope. For every constant $T>0$, there exists a network $f^\sigma:\mathbb{R}^d\rightarrow \mathbb{R}$, such that $\ell(f^\sigma, \mathcal{P}_1,X) \leq \min f^\sigma(X) -T$ and $u(f^\sigma, \mathcal{P}_1,X) \geq \max f^\sigma(X) +T$.
\end{restatable}

\begin{proof}
  We only prove the lower bound case; the upper bound case can be proven similarly.

  By \cref{prop:non-relu-loose-convex-hull}, there exists a network $f_1^\sigma$ and an input polytope $X$, such that $\conv(f_1^\sigma(X))$ is a strict superset of $f_1^\sigma(X)$. Let $\vc \in \conv(f_1^\sigma(X)) \setminus f_1^\sigma(X)$ such that $\delta := \min_{\vh \in f_1^\sigma(X)} \|\vh - \vc\|_2 > 0$. Let $f_2(\vh) := \|\vh - \vc\|_2$. Thus, we have
  \[
  \min_{\vh \in f_1^\sigma(X)} f_2(\vh) = \delta, \quad \min_{\vh \in \conv(f_1^\sigma(X))} f_2(\vh) = 0.
  \]
  By universal approximation, there exists a network $f_2^\sigma$ such that
  \[
    \sup_{\conv(f_1^\sigma(X))} |f_2^\sigma - \frac{2T}{\delta}f_2| \leq \epsilon,
  \]
  for some $\epsilon >0$ to be specified later. Let $f^\sigma := f_2^\sigma \circ f_1^\sigma$. Then, we have
  \begin{align*}
    \min f^\sigma(X) &\geq \min_{\vh \in f_1^\sigma(X)} \frac{2T}{\delta}f_2(\vh) - \epsilon = 2T - \epsilon, \\
    \ell(f^\sigma, \mathcal{P}_1,X) &\leq \min_{\vh \in \conv(f_1^\sigma(X))} \frac{2T}{\delta}f_2(\vh) + \epsilon = \epsilon.
  \end{align*}
  Thus, we have
  \[
    \ell(f^\sigma, \mathcal{P}_1,X) - \min f^\sigma(X) \leq -2T + 2\epsilon.
  \]
  Let $\epsilon = T/2$, we finalize the proof.
\end{proof}

We proceed to extend the result to cross-layer relaxations. The proof is similar to that of \cref{thm:cross_layer}, where we construct dummy layers to increase the network depth without changing the network output on $X$. The only difference is that now an identity layer might not be constructed exactly, but needs to be approximated.

\begin{restatable}{thm}{s}\label{thm:thm:non-relu-crosslayer}
 Let $d\in \mathbb{N}$ and let $X\in \mathbb{R}^d$ be  a convex polytope. For every $\alpha \in (0,1)$ and
every constant $T>0$, there exists a network $f\in \mathcal{N}^{\sigma}$ of depth $L$, $ f:\mathbb{R}^d\rightarrow \mathbb{R}$, such that $\ell(f, \mathcal{P}_{\max{(1, \alpha L)}},X) \leq \min f(X) -T$ and $u(f, \mathcal{P}_{\max{(1, \alpha L)}},X) \geq \max f(X) +T$.
\end{restatable}

\begin{proof}
  The proof directly follows that of \cref{thm:cross_layer}, as long as we can construct identity layers with arbitrary precision. By universal approximation, for any $\epsilon >0$, there exists a network $f_{\mathrm{id}}^\sigma$ such that
  \[
    \sup_{x \in \pi_i(X) + [-\delta, \delta]} \|f_{\mathrm{id}}^\sigma(x) - x\| \leq \epsilon,
  \]
  for $i \in [d]$ where $\pi_i(X)$ is the projection of $X$ onto its $i$-th dimension. By concatenating $d$ such networks in width, we constructed a network $\hat{f}_{\mathrm{id}}^\sigma$ such that
  \[
    \sup_{\vx \in X + [-\delta, \delta]^d} \|\hat{f}_{\mathrm{id}}^\sigma(\vx) - \vx\|_\infty \leq \epsilon,
  \]
  and the every output neuron only depends on independent input neurons. Let $\epsilon_k := \frac{\epsilon}{2k^2}$ and $\delta_k := \epsilon$ for the $k$-th pseudo identity layer. Thus, by triangle inequality, the error introduced by $m$ such layers is bounded by
  $\sum_{k=1}^m \epsilon_k \leq \sum_{k=1}^m \frac{\epsilon}{2k^2} < \epsilon$ for any $m \in \sN^+$. Therefore, by following the same construction in the proof of \cref{thm:cross_layer} and taking into account the $\epsilon$ approximation error introduced by the pseudo identity layers, we can finalize the proof similar to \cref{thm:non-relu-layerwise}.
  
\end{proof}

 \section{LLM Usage}

 LLMs (GPT-5) were used to polish the writing of the paper, and were not used for any other purpose.

\end{document}